\theoremstyle{plain}
\newtheorem{theorem}{Theorem}[section]
\newtheorem{proposition}[theorem]{Proposition}
\newtheorem{lemma}[theorem]{Lemma}
\newtheorem{corollary}[theorem]{Corollary}
\theoremstyle{definition}
\newtheorem{definition}[theorem]{Definition}
\theoremstyle{remark}
\newtheorem{remark}[theorem]{Remark}
\title{E-Values Expand the Scope of Conformal Prediction}
\author[1]{Etienne Gauthier\thanks{\texttt{etienne.gauthier@inria.fr}}}
\author[1]{Francis Bach}
\author[1,2]{Michael I. Jordan}
\affil[1]{Inria, Ecole Normale Supérieure, PSL Research University}
\affil[2]{Departments of EECS and Statistics, University of California, Berkeley}
\begin{document}

\maketitle

\begin{abstract}
Conformal prediction is a powerful framework for distribution-free uncertainty quantification. The standard approach to conformal prediction relies on comparing the ranks of prediction scores: under exchangeability, the rank of a future test point cannot be too extreme relative to a calibration set. This rank-based method can be reformulated in terms of p-values. In this paper, we explore an alternative approach based on e-values, known as conformal e-prediction. E-values offer key advantages that cannot be achieved with p-values, enabling new theoretical and practical capabilities. In particular, we present three applications that leverage the unique strengths of e-values: batch anytime-valid conformal prediction, fixed-size conformal sets with data-dependent coverage, and conformal prediction under ambiguous ground truth. Overall, these examples demonstrate that e-value-based constructions provide a flexible expansion of the toolbox of conformal prediction.
\end{abstract}

\section{Introduction}

Conformal prediction \citep{vovk2005algorithmiclearning} is a widely used statistical framework that provides predictive models with valid uncertainty quantification. The core objective is to construct conformal sets that, given features $X \in \mathcal{X}$, contain the true target $Y \in \mathcal{Y}$ with high probability. 

These guarantees are distribution-free, meaning they hold under any underlying data-generating distribution $\mathbb{P}$, provided the data are exchangeable. Consider a dataset of $n$ exchangeable data points $\{(X_i,Y_i) \}_{i=1,...,n}$, where each pair $(X_i,Y_i)$ is drawn from some distribution $\mathbb{P} = \mathbb{P}_X \otimes \mathbb{P}_{Y | X}$ over $\mathcal{X} \times \mathcal{Y}$. This dataset, referred to as the calibration set, is used to construct conformal sets. Let $\alpha \in (0, 1)$ be a given error level. Given a new pair $(X_{n+1}, Y_{n+1}) \sim \mathbb{P}$ such that $(X_1,Y_1),...,(X_n,Y_n),(X_{n+1},Y_{n+1})$ remain exchangeable, the objective is to construct a conformal set $\hat{C}_n(X_{n+1})$ satisfying
\begin{equation}
\label{cp_goal}
\mathbb{P}(Y_{n+1} \in \hat{C}_n(X_{n+1})) \geq 1 - \alpha,
\end{equation}
where the probability is taken over both the calibration set $\{(X_i,Y_i) \}_{i=1,...,n}$ and the new data point $(X_{n+1},Y_{n+1})$.

In conformal prediction, uncertainty quantification relies on a score function $S: \mathcal{X}\times\mathcal{Y}\rightarrow\mathbb{R}$, which is typically derived from a pre-trained model $f$. The choice of $S$ depends on the specific problem setting. In this work, we consider negatively-oriented scores, where smaller values indicate better performance, and assume these scores are positive.\footnote{The assumption that scores are negatively-oriented is not restrictive, as any positively-oriented score can be transformed, for example, via $S_{\text{negative}} = 1/(S_{\text{positive}} + \varepsilon)$
with $\varepsilon$ preventing division by zero. Similarly, most scores used in conformal prediction are already nonnegative. We explicitly assume nonnegativity to simplify the construction of e-variables in conformal e-prediction, as in (\ref{def_e_var}).} An example of such a score is the cross-entropy loss used in classification tasks:
\begin{equation}
\label{cross-entropy}
    S(X,Y) = - \log p_f(Y|X),
\end{equation}
where $p_f(Y|X)$ is the predicted probability assigned to the true class $Y$.

In standard conformal prediction methods, the idea is to rank nonconformity scores $S_i := S(X_i,Y_i)$ and compare their relative positions. This approach relies on the exchangeability of the scores, ensuring that the rank of a new test point is unlikely to be overly extreme compared to those in the calibration set. For completeness, we include an overview of this classical framework in Appendix \ref{app:intro-conformal-prediction}. 

Despite its theoretical appeal and simplicity, traditional conformal prediction faces significant limitations in practice. Some of these have to do with the lack of conditional guarantees, and there has been significant recent work aiming to address that limitation~\citep{Jung, gibbs2024conformalpredictionconditionalguarantees}.  In the current paper, we highlight three additional scenarios where conventional methods fall short: batch anytime-valid conformal prediction, fixed-size conformal sets with data-dependent coverage, and conformal prediction under ambiguous ground truth. As we emphasize in the next three subsections, these scenarios all reflect real-world problems where one would like to make use of conformal methods.  While conventional conformal prediction based on p-values falls short of addressing these problems, we will show that they are all solved by using e-values to construct conformal sets instead of p-values, a methodology known as \emph{conformal e-prediction}.  Our overall suggestion is that conformal e-prediction provides a flexible tool relative to classical conformal prediction that enhances the applicability of conformal methods in the complex, diverse, and dynamical settings of modern machine learning. The code implementation for our experiments is available at \url{https://github.com/GauthierE/evalues-expand-cp}.

\subsection{Batch anytime-valid conformal prediction}

In Section \ref{sec:ville}, we consider the following problem: data arrive sequentially in batches $b_t$, $t =1,2,\dotsc,$ with each batch containing $n_t \ge 0$ calibration data points $S_1^t,...,S_{n_t}^t$ and a test data point $S_{n_t+1}^t$. Importantly, the batches arrive sequentially, one after the other, and the number of batches is possibly unknown in advance. Between batches, the data distribution may shift, meaning the data from one batch is not identically distributed with the next. We simply assume that the data within each batch, including both calibration and test data points, are exchangeable, conditional on the previous data batches.

Given $\alpha \in (0,1)$, the goal is to construct a sequence of batch anytime-valid conformal sets $\hat{C}_t$ for all $t \ge 1$,  based on all past batches and the calibration data of batch $t$, satisfying:
\begin{equation}
\label{goal}
    \mathbb{P}(\forall t \ge 1, \ S_{n_t+1}^t \in \hat{C}_t) \ge 1-\alpha,
\end{equation}
where the probability is taken over all data points. If (\ref{goal}) holds, we call $\{\hat{C}_t \}_{t \ge 1}$ a sequence of batch anytime-valid conformal sets. As we now discuss, this problem arises in many real-life scenarios where data are received sequentially. 

\subsubsection{Motivating example 1: Pharmaceutical drug deployment across hospitals}

A real-world example of this problem arises in pharmaceutical drug deployment across hospitals. A company introduces a new drug, and a regulatory agency seeks statistical guarantees on its efficacy. The drug’s effect on patient $i$ in hospital $b_t$ is denoted by $S_i^t$, where each hospital corresponds to a data batch. After an initial calibration phase in hospital $b_t$, where the drug is tested on $n_t$ volunteer patients, the goal is to construct a conformal prediction set for $S_{n_t+1}^t$, representing a future patient in that hospital.

The deployment is inherently sequential: hospitals test the drug at different times, and results arrive progressively. The agency must provide anytime-valid statistical guarantees for all hospitals, constructing conformal prediction sets for future patients $S_{n_t+1}^t$ across all sites, regardless of how many batches $t$ have been received. 

Moreover, data distributions vary across hospitals due to differences in patient demographics, lifestyles, and medical histories. For instance, urban and rural hospitals may observe different drug efficacy patterns. The regulatory agency must account for these distributional shifts while leveraging prior batches to enhance predictive reliability.

\subsubsection{Motivating example 2: Quality control in supply chains}

In a production supply chain, a company regularly receives shipments of parts from a supplier. Each batch contains $q_t$ units, each with a quality rating $S_i^t$, assessed by the company. The batch size $q_t$ varies based on demand.

While unit quality within a batch is typically exchangeable, it may fluctuate across batches due to changes in the supplier’s production process (e.g., machinery wear) or transportation conditions. To manage this uncertainty, the company can apply batch anytime-valid conformal prediction to obtain quality guarantees. Specifically, for each batch, it selects $n_t<q_t$ units to measure quality, forming a calibration set $S_1^t,...,S_{n_t}^t$. Then, it predicts the quality of a randomly chosen product $S_{n_t+1}^t$ from the remaining units. Since quality assessment is costly, $n_t$ is typically small relative to $q_t$.

The key advantage of this method is its validity for any batch and any stopping time~$\tau$. The company can flexibly decide when to stop quality assessments, depending on prior observations. This adaptive approach helps optimize testing costs while ensuring reliable predictions, improving supply chain efficiency. 

\subsubsection{Limitations of traditional conformal prediction methods}

The standard method in conformal prediction produces a conformal set $\hat{C}_t$ for a batch of exchangeable data $S_1^t,...,S_{n_t}^t,S_{n_t+1}^t$ such that
\[
1- \alpha \le \mathbb{P}(S_{n_t+1}^t \in \hat{C}_t) < 1-\alpha + \frac{1}{n_t+1},
\]
where the rightmost inequality holds if the data are almost surely distinct. In dynamic scenarios such as batch anytime-valid conformal prediction, simply applying existing conformal prediction methods to each batch does not yield batch anytime-valid guarantees. We formalize this fact in the following lemma:

\begin{lemma}
Let $\hat{C}_t$ be the conformal set obtained using standard conformal prediction on batch $b_t$ for any $t \ge 1$. Suppose that: (i) almost surely, there are no ties within any batch; (ii) batches are independent; and (iii) $n_t \ge \frac{2}{\alpha}-1$ for all $t \ge 1$.

Then, $\{\hat{C}_t \}$ is not a sequence of batch anytime-valid conformal sets.
\end{lemma}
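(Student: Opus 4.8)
The plan is to prove the contrapositive of anytime-validity in its strongest form: under (i)--(iii) the probability on the left-hand side of (\ref{goal}) is in fact equal to $0$, which is strictly smaller than $1-\alpha$ since $\alpha<1$. So it suffices to bound the per-batch coverage away from $1$ uniformly, and then exploit independence across batches.

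First I would pin down the per-batch miscoverage. Since ``standard conformal prediction on batch $b_t$'' produces a set $\hat C_t$ that is a function of the calibration scores $S_1^t,\dots,S_{n_t}^t$ of that batch only, the exchangeability of $S_1^t,\dots,S_{n_t+1}^t$ within the batch together with assumption (i) (no ties) yields the two-sided bound quoted just before the lemma,
\[
1-\alpha\ \le\ \mathbb{P}\!\left(S_{n_t+1}^t\in\hat C_t\right)\ <\ 1-\alpha+\frac{1}{n_t+1}.
\]
Invoking (iii), $n_t+1\ge 2/\alpha$ gives $\frac{1}{n_t+1}\le\alpha/2$, hence $\mathbb{P}(S_{n_t+1}^t\in\hat C_t)<1-\alpha/2$ for every $t\ge1$. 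Next I would put $E_t:=\{S_{n_t+1}^t\in\hat C_t\}$; since both $\hat C_t$ and $S_{n_t+1}^t$ are measurable with respect to the data of batch $b_t$ alone, assumption (ii) makes $\{E_t\}_{t\ge1}$ mutually independent. Therefore, for every finite horizon $T$,
\[
\mathbb{P}\!\left(\forall t\le T,\ S_{n_t+1}^t\in\hat C_t\right)=\prod_{t=1}^{T}\mathbb{P}(E_t)<\Bigl(1-\frac{\alpha}{2}\Bigr)^{T},
\]
and letting $T\to\infty$ shows $\mathbb{P}(\forall t\ge1,\ S_{n_t+1}^t\in\hat C_t)=0<1-\alpha$, so (\ref{goal}) fails. (Equivalently, one may note $\sum_t\mathbb{P}(E_t^c)=\infty$ with the $E_t^c$ independent and apply the second Borel--Cantelli lemma.)

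The coverage bound and the arithmetic with (iii) are entirely routine; the only points needing care are (a) the independence claim -- one must spell out that applying standard conformal prediction batch-by-batch gives each $\hat C_t$ depending on batch $b_t$ only, which is exactly what lets (ii) transfer to independence of the $E_t$ -- and (b) the (mild) interpretive point that anytime-validity is a requirement over an unbounded number of batches, so that the product of the per-batch coverages, each bounded by $1-\alpha/2<1$, collapses to $0$. If one instead insisted on a fixed finite number $T$ of batches, the same computation shows the guarantee already fails as soon as $(1-\alpha/2)^T<1-\alpha$.
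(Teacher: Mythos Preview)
Your proof is correct and follows essentially the same approach as the paper's: bound each per-batch coverage by $1-\alpha/2$ via assumptions (i) and (iii), use (ii) to factor the joint probability as a product, and conclude that the product falls below $1-\alpha$. The only cosmetic difference is that the paper stops at the explicit finite horizon $T=\lceil \log(1-\alpha)/\log(1-\alpha/2)\rceil$ to get $(1-\alpha/2)^T\le 1-\alpha$, whereas you let $T\to\infty$ and obtain the stronger conclusion that the probability is in fact $0$; you also note the finite-$T$ version at the end, so the two arguments are effectively equivalent.
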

\begin{proof}
    Let $T := \Big\lceil \frac{\log(1-\alpha)}{\log(1-\alpha/2)} \Big\rceil$. We have:
    \begin{align*}
    \mathbb{P}(\forall t \ge 1, \ S_{n_t+1}^t \in \hat{C}_t) &\le \mathbb{P}(\forall t =1,...,T, \ S_{n_t+1}^t \in \hat{C}_t)\\
    &= \prod_{t=1}^T \mathbb{P}(S_{n_t+1}^t \in \hat{C}_t)\\
    &< \prod_{t=1}^T \left(1-\alpha+\frac{1}{n_t+1}\right)\\
    &< (1 - \alpha/2)^T \le 1-\alpha.
    \end{align*}
    The equality comes from the fact that all sets $\hat{C}_t$ depend only on data from batch $b_t$ and batches are independent. The last two inequalities come from the assumptions on $n_t$ and the definition of $T$ respectively.
\end{proof}
Standard conformal prediction methods do not provide simultaneous coverage guarantees for each batch when the number of batches is unknown in advance. Using a stricter coverage $\Bar{\alpha} < \alpha$ combined with union bound does not solve this issue, as it requires knowing the number of batches beforehand. We will show how e-values address this issue in Section \ref{sec:ville}.

\subsection{Fixed-size conformal sets with data-dependent coverage}

Traditional conformal prediction methods operate by fixing a coverage level $\alpha$ in advance, ensuring that the true label falls within the conformal set with probability at least $1-\alpha$. While this approach offers strong theoretical guarantees, it does not provide control over the size of the conformal sets, which can vary widely depending on the underlying data distribution.

Instead of fixing $\alpha$ a priori, we allow the coverage level $\tilde{\alpha}$ to adapt based on the data, tailored to yield conformal sets of desired size. We describe this adaptive approach in Section \ref{sec:post-hoc}. It enables more flexible conformal sets that are better aligned with the specific instance being predicted, potentially improving the interpretability and usability of conformal prediction in real-world applications.

\subsubsection{Motivating example: Medical diagnosis}

In healthcare, doctors often rely on machine learning models to help identify potential diseases based on patient symptoms, medical history, and test results. Conformal prediction methods enhance these models by providing sets of possible diagnoses with confidence guarantees, assisting doctors in prioritizing further tests or treatments. However, due to time constraints and resource limitations, doctors can typically consider only a fixed number of diagnoses (e.g., the top three or four).

While statistically sound, the traditional conformal prediction approach does not control set size, which can vary significantly. In some cases, the set may be impractically large, while in others, it may be too small, risking missed diagnoses.

\subsection{Conformal prediction under ambiguous ground truth}

We will also discuss a third application of conformal e-prediction, which naturally addresses a problem introduced by \cite{stutz2023conformal}.

In machine learning, labels are not always precisely known, particularly in scenarios where expert annotations are used. Instead of standard feature-label pairs $(X_i,Y_i)$, we may encounter $(X_i,Y_i^j)$ for $j=1,...,m$ where $m$ is the number of experts providing predictions. This framework arises, for instance, in medicine, where a patient $X_i$ is assessed by multiple experts, each predicting a diagnosis $Y_i^j$. Consequently, the samples become non-exchangeable, making traditional conformal prediction methods relying on rank-based statistics inapplicable.

This setting has been discussed by \cite{stutz2023conformal} under a framework they refer to as \emph{Monte Carlo conformal prediction}. In their approach, the calibration set consists of features $X_i \sim \mathbb{P}_X$, and for each $X_i$, $m$ labels $Y_i^j \sim \mathbb{P}_{Y | X_i}$, for $j=1,...,m$. Consequently, the calibration set ${\{(X_i,Y_i^j)\}}{\substack{j=1,...,m\\i=1,...,n}}$ no longer consists of exchangeable data, making the standard conformal prediction technique inapplicable.

To address this, \cite{stutz2023conformal} build on classical statistical results on p-variable averaging to provide a coverage guarantee of $1-2\alpha$. In Section \ref{sec:ambiguous-ground-truth}, we suggest an alternative based on e-values for which we can establish theoretical guarantees for achieving a coverage of $1-\alpha$.  Thus an e-value-based appproach can provide uncertainty quantification in this non-exchangeable setting.

\subsection{A brief overview of conformal e-prediction}

As we will show, compared to p-variables, on which most existing conformal prediction methods are based, e-variables offer greater flexibility and enable the construction of conformal sets in more complex settings.\footnote{We refer to ``p-variables'' and ``e-variables'' when discussing the underlying random variable, and ``p-values'' and ``e-values'' for the observed values.}  We begin with a brief introduction to e-values and their usage in conformal prediction.

\begin{definition}[e-variable]
    An \emph{e-variable} E is a nonnegative random variable that satisfies 
    \[
    \mathbb{E}[E] \le 1.
    \]
\end{definition}
Coupled with probabilistic inequalities, well-designed e-variables based on the calibration set and the test data point enable the construction of relevant conformal sets. For instance, using Markov's inequality, we obtain:
\[
\mathbb{P}(E < 1/\alpha) = 1-\mathbb{P}(E \ge 1/\alpha) \ge 1- \alpha\mathbb{E}[E] \ge 1- \alpha,
\]
which allows for the construction of conformal sets satisfying (\ref{cp_goal}).

\begin{remark}[Terminology]
    For clarity, we will use the term \emph{conformal p-prediction} for p-variable-based conformal prediction methods, rather than simply \emph{conformal prediction} as is common in the literature. \emph{Conformal e-prediction} refers to conformal prediction methods based on e-variables. The term \emph{conformal prediction} broadly denotes the construction of conformal sets for a test point using a calibration set, regardless of the method.  
\end{remark}
Initially, in the 1990s, conformal prediction methods were introduced using e-variables (under different names, as the term e-variable only recently emerged in the literature). Over the years, conformal prediction methods based on p-variables gradually gained popularity.  For a discussion on the advantages of e-variables and p-variables in conformal prediction, we recommend reading \cite{vovk2024conformaleprediction}, who provides a historical context on why p-variables came to dominate over e-variables in conformal prediction. While conformal p-prediction has been widely used, e-variables bring several fundamental advantages that can be advantageous. 
In this work, we leverage e-variables to produce valid conformal sets for the three case studies introduced earlier: batch anytime-valid conformal prediction, fixed-size conformal sets with data-dependent coverage, and conformal prediction under ambiguous ground truth.
Although any e-variable can be used, for concreteness we adopt the following e-variable: 
\begin{equation} 
\label{def_e_var}
    E = \frac{S_{n+1}}{\frac{1}{n+1}\sum_{i=1}^{n+1}S_i},
\end{equation}
which was introduced by \cite{balinsky2024EnhancingCP}, and which can be readily seen to have expectation equal to one under exchangeability.\footnote{For a formal proof, see \citet{balinsky2024EnhancingCP}.} Using this e-variable in conformal prediction we directly compare the test data point's score with the average of all the scores, rather than comparing their ranks.

\section{Batch Anytime-valid Conformal Prediction}
\label{sec:ville}

The methods derived from e-statistics are powerful tools for statistical inference due to their connection with fundamental probabilistic inequalities. As discussed previously, a key starting point is Markov's inequality, which provides an upper bound on the tail probabilities of nonnegative random variables. A stronger result generalizing Markov's inequality is an inequality due to \cite{ville1939}, which applies to nonnegative supermartingales and is central in sequential analysis and anytime-valid inference. To state Ville’s inequality rigorously, we first define the concepts of filtrations and (super)martingales; see \cite{Williams1991martingale} for a standard reference.

Formally, we fix a probability space $(\Omega,\mathcal{F},\mathbb{P})$, where $\Omega$ is a sample space, $\mathcal{F}$ is a $\sigma$-algebra, and $\mathbb{P}$ is a probability measure. We focus on real-valued random variables, defined as measurable functions mapping $\Omega$ to $\mathbb{R}$.

\begin{definition}[Filtration]
A \emph{filtration} is a sequence $\{\mathcal{F}_t\}_{t\ge0}$ of sub-$\sigma$-algebras of $\mathcal{F}$ such that $\mathcal{F}_s \subseteq  \mathcal{F}_t$ for all $s \le t$.
\end{definition}

\begin{definition}[Martingale]
Let $\{M_t\}_{t\ge0}$  be a sequence of random variables. The sequence $\{M_t\}_{t\ge0}$ is a \emph{martingale} for the filtration $\{\mathcal{F}_t\}_{t\ge0}$ if:
\begin{itemize}
    \item $M_t$ is $\mathcal{F}_t$-measurable for all $t \ge 0$,
    \item $\mathbb{E}[\lvert M_t \rvert] < +\infty $ for all $t \ge 0$,
    \item $\mathbb{E}[ M_{t+1} | \mathcal{F}_{t}] = M_t$ (almost surely) for all $t \ge 0$.
\end{itemize}

If the equality $\mathbb{E}[ M_{t+1} \mid \mathcal{F}_{t}] = M_t$ is replaced with the inequality $\mathbb{E}[ M_{t+1} \mid \mathcal{F}_{t}] \le M_t$, $\{M_t\}_{t\ge0}$ is said to be a \emph{supermartingale}. In particular, all martingales are supermartingales.
\end{definition}
If the filtration is clear from context, we may say that $\{M_t\}_{t\ge0}$ is a (super)martingale without explicitly specifying the filtration $\{\mathcal{F}_t\}_{t\ge0}$. In this work, we focus on supermartingales with an initial value of $M_0=1$; these are usually refered to as \emph{test supermartingales} (see, for example, \cite{waudbysmith2023estimating}). In this specific case, Ville's inequality can be stated as follows:

\begin{theorem}[Ville's Inequality]
Let $\{M_t\}_{t\ge0}$ be a nonnegative test supermartingale. Then, for any $\alpha \in (0,1)$:
\[
\mathbb{P}(\forall t \ge 0, \ M_t < 1/\alpha) \ge 1-\alpha.
\]
\end{theorem}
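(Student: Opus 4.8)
The plan is to work with the complement of the event in question: the event $\{\forall t \ge 0,\ M_t < 1/\alpha\}$ fails precisely when the process ever reaches the level $1/\alpha$, so it suffices to prove $\mathbb{P}(\exists\, t \ge 0 :\ M_t \ge 1/\alpha) \le \alpha$. To that end I would introduce the hitting time $\tau := \inf\{t \ge 0 :\ M_t \ge 1/\alpha\}$, with the convention $\inf \emptyset = +\infty$. This is a stopping time for $\{\mathcal{F}_t\}_{t \ge 0}$, since $\{\tau \le t\}$ is determined by $M_0,\dots,M_t$, each of which is $\mathcal{F}_t$-measurable. Note that the index $t=0$ contributes nothing, as $M_0 = 1 < 1/\alpha$ for $\alpha \in (0,1)$.

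The core step is to bound $\tau$ via the stopped process. For any fixed horizon $T \ge 0$, the truncation $\tau \wedge T$ is a bounded stopping time, and the stopped process $\{M_{\tau \wedge t}\}_{t \ge 0}$ is again a nonnegative supermartingale, so $\mathbb{E}[M_{\tau \wedge T}] \le \mathbb{E}[M_{\tau \wedge 0}] = \mathbb{E}[M_0] = 1$. Since $M_t \ge 0$ for all $t$, discarding the contribution of the event $\{\tau > T\}$ only decreases the expectation, while on $\{\tau \le T\}$ we have $M_{\tau \wedge T} = M_{\tau} \ge 1/\alpha$ by definition of $\tau$ (in discrete time the hitting level is attained exactly at $\tau$). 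Combining these facts,
\[
1 \ \ge\ \mathbb{E}[M_{\tau \wedge T}] \ \ge\ \mathbb{E}\!\left[M_{\tau \wedge T}\,\mathbf{1}_{\{\tau \le T\}}\right] \ \ge\ \frac{1}{\alpha}\,\mathbb{P}(\tau \le T),
\]
hence $\mathbb{P}(\tau \le T) \le \alpha$ for every $T \ge 0$.

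Finally I would let $T \to \infty$. The events $\{\tau \le T\}$ increase to $\{\tau < \infty\} = \{\exists\, t \ge 0 :\ M_t \ge 1/\alpha\}$, so by continuity of $\mathbb{P}$ from below, $\mathbb{P}(\tau < \infty) = \lim_{T \to \infty} \mathbb{P}(\tau \le T) \le \alpha$. Passing to the complement gives $\mathbb{P}(\forall t \ge 0,\ M_t < 1/\alpha) = 1 - \mathbb{P}(\tau < \infty) \ge 1 - \alpha$, which is the claim. (Alternatively, one could invoke Doob's maximal inequality for nonnegative supermartingales directly with level $\lambda = 1/\alpha$ and then take $T \to \infty$; the argument above simply makes that inequality explicit.)

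I expect the main obstacle to be the rigorous justification of $\mathbb{E}[M_{\tau \wedge T}] \le 1$, which rests on the optional stopping theorem for supermartingales at bounded stopping times (equivalently, on the stability of the supermartingale property under stopping). In the discrete-time setting relevant here the integrability requirements are automatic, so this is standard, but it is the one place where the full supermartingale structure — and not merely the marginal bound $\mathbb{E}[M_t] \le 1$ for each $t$ — is essential: a pointwise-in-$t$ Markov bound controls only $\mathbb{P}(M_t \ge 1/\alpha)$ for a fixed $t$, never the uniform-in-$t$ event. The remaining limiting step is a routine application of monotone continuity of probability.
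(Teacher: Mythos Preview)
Your argument is correct. The stopping-time approach you use is the standard direct route to Ville's inequality: introduce the first hitting time of level $1/\alpha$, apply optional stopping at $\tau \wedge T$, exploit nonnegativity to drop the $\{\tau > T\}$ contribution, and pass to the limit via monotone continuity. Each step is justified in the discrete-time setting with $M_0 = 1$, and your remark that the full supermartingale structure (not just $\mathbb{E}[M_t] \le 1$) is essential is exactly on point.

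The paper does not spell out a proof; it only remarks that one follows from Doob's upcrossing lemma and the supermartingale convergence theorem. That route works too---one shows a nonnegative supermartingale converges almost surely, then combines this with a maximal-type bound---but it is more circuitous than what you do. Your argument is in fact the more elementary and self-contained of the two: it needs only optional stopping at a bounded stopping time, which in discrete time is a two-line induction, whereas the upcrossing lemma and convergence theorem are heavier machinery. The paper's referenced approach has the side benefit of yielding almost-sure convergence of $\{M_t\}$ as a byproduct, which your proof does not, but that extra conclusion is not needed for the inequality itself.
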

A straightforward proof follows from the upcrossing lemma and the supermartingale convergence theorem of \cite{doob1953}.

\begin{remark}[Ville’s Inequality with Stopping Times]
A useful alternative formulation of Ville’s inequality involves stopping times. Recall that a random variable $\tau$ taking values in $\mathbb{N} \cup \{\infty \}$ is a \emph{stopping time} with respect to a filtration $\{\mathcal{F}_t\}_{t\ge0}$ if $\{\tau \le t\} \in \mathcal{F}_t$ for all $t \ge 0$, meaning that the decision to stop at time $\tau$ depends only on the information available up to $t$. Given a nonnegative test supermartingale $\{M_t\}_{t\ge0}$, Ville’s inequality extends to stopping times as follows:
\[
\mathbb{P}(M_\tau < 1/\alpha) \ge 1-\alpha,
\]
for any stopping time $\tau$. This result ensures that the coverage guarantee holds regardless of when we stop the process based on the observed data. A proof of this equivalent reformulation can be found in Lemma 3 of \cite{howard2021timeuniform}.
\end{remark}
Ville’s inequality provides a strong guarantee for nonnegative test supermartingales, bounding the probability that the process exceeds a threshold $1/\alpha$ at any time step. This makes it a crucial tool for constructing batch anytime-valid conformal sets.

\subsection{Theoretical framework}

To apply Ville's inequality, we first need a carefully chosen nonnegative test supermartingale, which in turn requires defining an appropriate filtration. Throughout this section, we will consider the filtration of sigma-algebras generated by all random variables from the data batches obtained so far. More precisely:
\begin{align*}
    &\mathcal{F}_1 = \sigma(S_1^{1},...,S_{n_1}^1,S_{n_1+1}^1),\\
    &\mathcal{F}_2 = \sigma(S_1^{1},...,S_{n_1}^1,S_{n_1+1}^1,S_1^{2},...,S_{n_2}^2,S_{n_2+1}^2),
\end{align*}
and more generally: 
\[
\mathcal{F}_t = \sigma(S_1^{1},...,S_{n_1}^1,S_{n_1+1}^1,...,S_1^{t},...,S_{n_t}^t,S_{n_t+1}^t).
\]
In particular, $\mathcal{F}_0$ is the trivial sigma-algebra $\{ \emptyset, \Omega\}$. We can now define a sequence of random variables $\{M_t\}_{t \ge 0}$, which will be fundamental for defining our batch anytime-valid conformal sets. The definition of $M_t$ is inspired by the e-variable in Equation (\ref{def_e_var}) introduced by \cite{balinsky2024EnhancingCP}.

\begin{theorem}
\label{thm:martingale}
    For all $t \ge 0$, the sequence of random variables $\{M_t \}_{t \ge 0}$ defined by:
    \[
    M_t = \prod_{s=1}^t E_s
    \]
    where 
    \[
    E_s = \frac{S_{n_s+1}^s}{\frac{1}{n_s+1}\sum_{j=1}^{n_s+1} S_j^s}
    \]
    for all $s \ge 1$, is a nonnegative test supermartingale.
\end{theorem}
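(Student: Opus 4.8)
The plan is to verify, one at a time, the defining properties of a nonnegative test supermartingale for the filtration $\{\mathcal{F}_t\}_{t\ge0}$: nonnegativity together with $M_0=1$, $\mathcal{F}_t$-measurability, integrability, and the supermartingale inequality $\mathbb{E}[M_{t+1}\mid\mathcal{F}_t]\le M_t$. The first property is immediate: since all scores are positive, each denominator $\frac{1}{n_s+1}\sum_{j=1}^{n_s+1}S_j^s$ is strictly positive and each numerator $S_{n_s+1}^s$ is nonnegative, so every $E_s\ge0$ is well defined and $M_t=\prod_{s=1}^t E_s\ge0$; the empty product gives $M_0=1$. Measurability is equally clear, as $M_t$ is a Borel function of the scores from batches $1,\dots,t$, hence $\mathcal{F}_t$-measurable.

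The heart of the argument is the identity $\mathbb{E}[E_{t+1}\mid\mathcal{F}_t]=1$ almost surely. Here I would invoke the standing assumption that, conditionally on the past batches (that is, on $\mathcal{F}_t$), the block $S_1^{t+1},\dots,S_{n_{t+1}+1}^{t+1}$ is exchangeable; to make the notion of exchangeability of a finite block unambiguous I would also condition on the batch size $n_{t+1}$, writing $\mathcal{G}:=\sigma(\mathcal{F}_t,n_{t+1})$. With $\bar S^{t+1}:=\frac{1}{n_{t+1}+1}\sum_{j=1}^{n_{t+1}+1}S_j^{t+1}$, which is a symmetric function of the block, exchangeability forces $\mathbb{E}[S_i^{t+1}/\bar S^{t+1}\mid\mathcal{G}]$ to be the same value for every $i=1,\dots,n_{t+1}+1$. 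Summing over $i$ and using $\sum_{i}S_i^{t+1}/\bar S^{t+1}=n_{t+1}+1$ shows this common value equals $1$; in particular $\mathbb{E}[E_{t+1}\mid\mathcal{G}]=\mathbb{E}[S_{n_{t+1}+1}^{t+1}/\bar S^{t+1}\mid\mathcal{G}]=1$, and since $\mathcal{F}_t\subseteq\mathcal{G}$ the tower property gives $\mathbb{E}[E_{t+1}\mid\mathcal{F}_t]=1$. All conditional expectations here are legitimate because $0\le E_{t+1}\le n_{t+1}+1<\infty$ (positivity of the scores gives $S_{n_{t+1}+1}^{t+1}/\sum_j S_j^{t+1}\le1$, and batches are finite). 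This is precisely the unit-expectation computation behind \eqref{def_e_var}, now carried out conditionally.

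From this identity the remaining properties follow quickly. Since $M_t$ is nonnegative and $\mathcal{F}_t$-measurable, the ``take out what is known'' rule for nonnegative random variables gives $\mathbb{E}[M_{t+1}\mid\mathcal{F}_t]=\mathbb{E}[M_t E_{t+1}\mid\mathcal{F}_t]=M_t\,\mathbb{E}[E_{t+1}\mid\mathcal{F}_t]=M_t$, so $\{M_t\}$ satisfies the supermartingale inequality (indeed with equality, so it is a martingale). Iterating the tower property then yields $\mathbb{E}[M_t]=\mathbb{E}[M_{t-1}]=\dots=\mathbb{E}[M_0]=1$, hence $\mathbb{E}[\lvert M_t\rvert]=\mathbb{E}[M_t]=1<\infty$ and integrability holds. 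Combining the four points shows $\{M_t\}_{t\ge0}$ is a nonnegative test supermartingale.

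I expect the only real obstacle to be bookkeeping rather than substance: pinning down the status of the batch sizes $n_t$ (fixed, predictable, or merely revealed together with each batch) and phrasing the conditional-exchangeability step so that the symmetrization $\mathbb{E}[S_i^{t+1}/\bar S^{t+1}\mid\mathcal{G}]=\mathbb{E}[S_1^{t+1}/\bar S^{t+1}\mid\mathcal{G}]$ is valid and every conditional expectation is well defined. Conditioning on $n_{t+1}$ inside the argument, as above, handles this cleanly, and the rest is routine.
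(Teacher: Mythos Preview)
Your proposal is correct and follows essentially the same route as the paper: verify $M_0=1$, nonnegativity, and then use $\mathbb{E}[E_{t+1}\mid\mathcal{F}_t]=1$ (from conditional exchangeability within batch $t{+}1$) together with ``take out what is known'' to get $\mathbb{E}[M_{t+1}\mid\mathcal{F}_t]=M_t$. The only difference is that the paper outsources the identity $\mathbb{E}[E_{t+1}\mid\mathcal{F}_t]=1$ to \cite{balinsky2024EnhancingCP}, whereas you supply the symmetrization argument directly and also check integrability explicitly; both are welcome additions but do not change the underlying approach.
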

\begin{proof}
    First, by the definition of an empty product, we have that $M_0 = 1$. Now, since we have assumed in this work that the scores are positive, it is clear that $M_t \ge 0$ for all $t \ge 0$. The only thing left to prove is that $\{M_t \}_{t \ge 0}$ is a supermartingale. Let $t \ge 1$. We have:
    \[
        \mathbb{E}[M_t | \mathcal{F}_{t-1}] = \mathbb{E}\left[\prod_{s=1}^t E_s \Big| \mathcal{F}_{t-1}\right] = \underbrace{\left(\prod_{s=1}^{t-1} E_s \right)}_{= M_{t-1}} \mathbb{E}[E_t | \mathcal{F}_{t-1}]
    \]
    and see that $\mathbb{E}[E_t | \mathcal{F}_{t-1}]= 1 $ since we assumed that, within a given batch, the data are exchangeable, conditional on the previous data batches. A proof of this fact can be found in \cite{balinsky2024EnhancingCP}. Therefore, $\{M_t \}_{t\ge0}$ is a martingale, and thus, in particular, a supermartingale.
\end{proof}
Therefore, by applying Ville's inequality to the test supermartingale $\{M_t\}_{t \ge 0}$, we deduce the following corollary.

\begin{corollary}
\label{cor:bav-cs}
    For $t \ge 1$, let $\hat{C}_t$ be the subset of $\mathbb{R}$ defined by:
    \begin{equation}
    \label{bav-cs}
    \hat{C}_t := \left\{ v \in \mathbb{R}_+ : \prod_{s=1}^{t-1} E_s \times \frac{v}{\frac{1}{n_t+1}\left(\sum_{j=1}^{n_t} S_j^t + v\right)} < 1/\alpha \right\}.
    \end{equation}
    Thus $\{\hat{C}_t \}_{t \ge 0}$ is a sequence of batch anytime-valid conformal sets.
\end{corollary}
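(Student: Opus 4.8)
The plan is to combine Theorem~\ref{thm:martingale} with Ville's inequality and then translate the resulting probabilistic statement into a containment statement. First I would invoke Theorem~\ref{thm:martingale} to conclude that $\{M_t\}_{t \ge 0}$, with $M_t = \prod_{s=1}^t E_s$, is a nonnegative test supermartingale for the filtration $\{\mathcal{F}_t\}_{t \ge 0}$. Applying Ville's inequality to this process immediately yields
\[
\mathbb{P}(\forall t \ge 0, \ M_t < 1/\alpha) \ge 1-\alpha.
\]
Since $M_0 = 1 < 1/\alpha$ holds deterministically (as $\alpha \in (0,1)$), the event $\{\forall t \ge 0, \ M_t < 1/\alpha\}$ coincides with $\{\forall t \ge 1, \ M_t < 1/\alpha\}$, so it remains only to show that, for each $t \ge 1$, the event $\{M_t < 1/\alpha\}$ is the same event as $\{S_{n_t+1}^t \in \hat{C}_t\}$.

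The second step is exactly this identification, and it is pure scalar algebra. Substituting the candidate value $v = S_{n_t+1}^t$ into the expression appearing in the definition~(\ref{bav-cs}) of $\hat{C}_t$, I would note that $\sum_{j=1}^{n_t} S_j^t + v = \sum_{j=1}^{n_t+1} S_j^t$, so that the fraction $\frac{v}{\frac{1}{n_t+1}\left(\sum_{j=1}^{n_t} S_j^t + v\right)}$ becomes precisely $E_t$. Hence the full product in~(\ref{bav-cs}) evaluated at $v = S_{n_t+1}^t$ equals $\left(\prod_{s=1}^{t-1} E_s\right) E_t = \prod_{s=1}^{t} E_s = M_t$, and therefore $S_{n_t+1}^t \in \hat{C}_t$ if and only if $M_t < 1/\alpha$. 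One should also observe that $S_{n_t+1}^t \in \mathbb{R}_+$ by the standing positivity assumption on the scores, so it is indeed an admissible candidate for membership in $\hat{C}_t$. Chaining the two steps gives $\mathbb{P}(\forall t \ge 1, \ S_{n_t+1}^t \in \hat{C}_t) = \mathbb{P}(\forall t \ge 1, \ M_t < 1/\alpha) \ge 1-\alpha$, which is exactly~(\ref{goal}).

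I do not anticipate a genuine obstacle here: the corollary is a direct consequence of the supermartingale property already established in Theorem~\ref{thm:martingale} together with the algebraic fact that $M_t$ is precisely the left-hand side of the defining inequality of $\hat{C}_t$ at $v = S_{n_t+1}^t$. The only points that warrant a moment of care are bookkeeping ones: reconciling the index range $t \ge 0$ in Ville's inequality with the range $t \ge 1$ used in the notion of batch anytime-valid conformal sets (handled by the trivial check $M_0 = 1 < 1/\alpha$), and confirming that the positivity of the scores makes the denominators nonzero and the test point an eligible element of $\hat{C}_t \subseteq \mathbb{R}_+$.
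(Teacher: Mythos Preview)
Your proposal is correct and follows exactly the approach the paper indicates: apply Ville's inequality to the test supermartingale $\{M_t\}$ of Theorem~\ref{thm:martingale}, then identify $\{M_t < 1/\alpha\}$ with $\{S_{n_t+1}^t \in \hat{C}_t\}$ via the substitution $v = S_{n_t+1}^t$. The paper itself only says ``by applying Ville's inequality \dots\ we deduce the following corollary,'' so your write-up is in fact more detailed than the original.
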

In the definition of $\hat{C}_t$, $v$ serves as a placeholder for the random variable $S_{n_t+1}^t$. 
The set $\hat{C}_t$ is constructed to ensure that $S_{n_t+1}^t$ falls within it with high probability.

\begin{remark}
\label{rmk:universal-portfolio}
    We constructed the martingale $M_t$ in Theorem \ref{thm:martingale} by simply taking the product of the e-variables $E_s$. While the product martingale has certain desirable properties, alternative constructions are possible. In particular, Theorem \ref{thm:martingale} extends naturally, and one can see that the cumulative product $M_t = \prod_{s=1}^t (1-\lambda_s + \lambda_s E_s)$, where $\lambda_t \in [0,1]$ is any measurable function of $E_1,...,E_{t-1}$, is also a martingale. This construction enjoys appealing theoretical properties, especially when using $\lambda_t=\underset{\lambda \in [0,\gamma]}{\arg\max} \frac{1}{t-1} \sum_{s=1}^{t-1} \log(1-\lambda+\lambda E_s)$ for some $\gamma \in (0,1]$. In this paper, we focus on the ``all-in" process with $\lambda_t=1$ for simplicity, but exploring these alternative processes would be worthwhile. For further details, we refer to \cite{waudbysmith2023estimating,wang2024ebacktesting,ramdas2024hypothesistestingevalues}. This construction is closely related to the universal portfolio algorithm proposed by \cite{cover1991up}; see also \cite{vovk1990aggregating,cover1996up,vovk1998up,orabona2023tightconcentrations,ryu2024onconfidencesequences}.
\end{remark}

\subsection{Experiments}

In real-world machine learning applications, models are often deployed in dynamic environments where data arrives in a sequential or distributed manner. Unlike traditional static datasets where calibration is performed globally, real-world deployments often require incremental calibration as new users interact with the system.

For our experiments, we use the Federated Extended MNIST (FEMNIST) dataset, a standard benchmark in federated learning and image classification \citep{caldas2018leaf}. FEMNIST extends the EMNIST dataset \citep{cohen2017emnist}, itself derived from MNIST \citep{lecun98mnist}, and includes 62 classes: digits (0-9) and uppercase and lowercase letters (A-Z, a-z). Originally designed for federated learning, where data are distributed non-i.i.d.\ across clients, we adapt it by treating each writer as a data batch. The dataset consists of 3,597 clients—each representing a unique writer—with varying numbers of samples, introducing natural heterogeneity and imbalance. We leverage this structure to simulate scenarios where data arrives in groups with inherent variability. The dataset is split into a training set of approximately 650,000 images and a test set of about 165,000 images, with distinct writers in each set to prevent overlap. Each image is a $28\times28$ grayscale representation, consistent with the original MNIST format.

\begin{figure}[h!]
    \centering
    \includegraphics[width=0.6\textwidth]{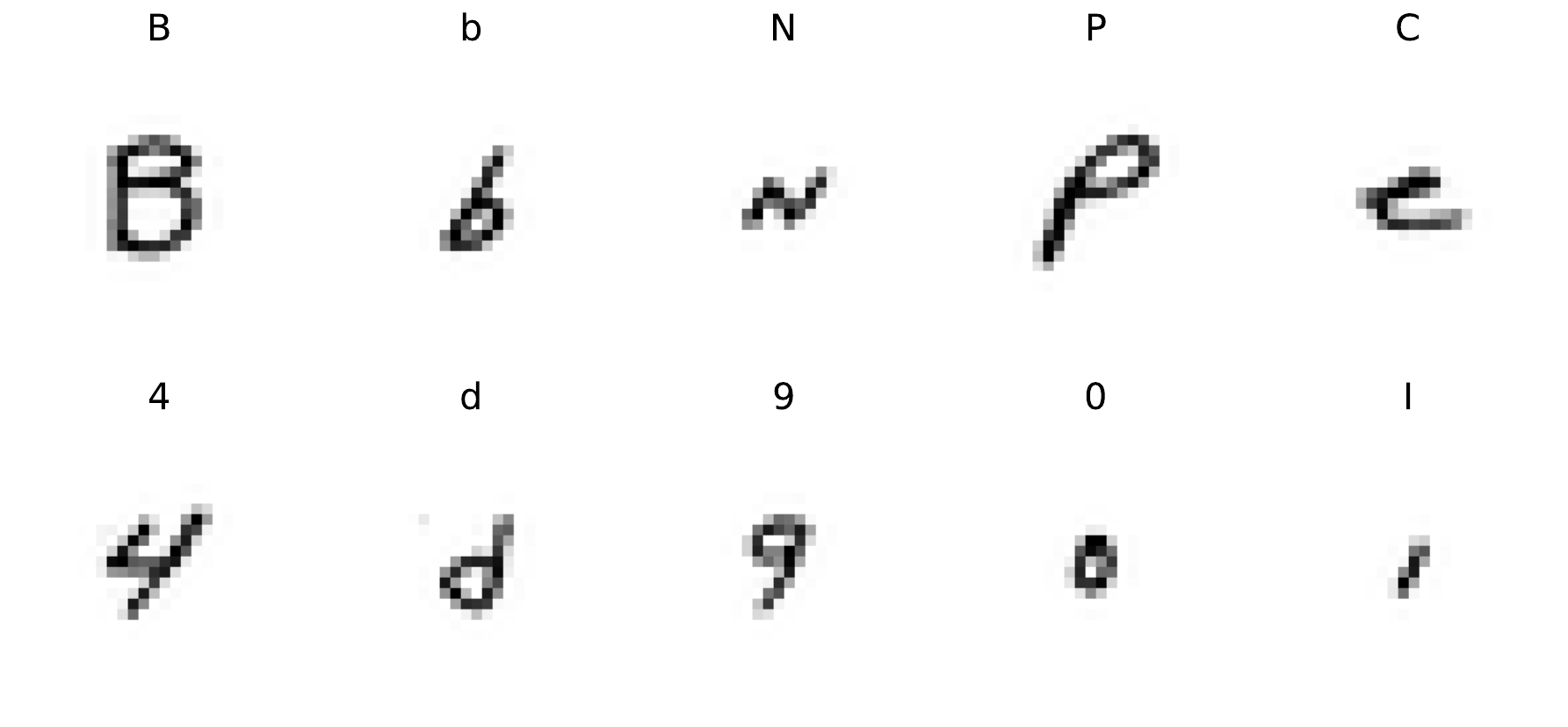}
    \vspace{-1.2em}
    \caption{Some images from FEMNIST with their associated class label.}
    \label{fig:femnist_image}
\end{figure}

We train a convolutional neural network $f$, inspired by LeNet \citep{lecun1998lenet} and adapted for our 62-class recognition task. This simple model serves as a black-box predictor to evaluate the effectiveness of our conformal prediction method\footnote{Since conformal prediction is model-agnostic, its validity holds regardless of the underlying model. Therefore, we use a standard, well-understood model rather than a state-of-the-art architecture. This choice ensures that the observed coverage behavior stems from the conformal method itself rather than from improvements in the base model’s accuracy.}. Details on the architecture and training parameters are provided in Appendix \ref{app:ville}. On the test set, $f$ achieves 87.6\% accuracy.

The test set consists of multiple writers, each contributing a variable number of image samples. To implement our batch anytime-valid conformal prediction method, we proceed as follows. We randomly select $T=50$ writers from the test set to serve as data batches. Since the dataset is finite, we run our method for a fixed number of batches, 50 in this case, but the approach extends to any $T$. We then split the test set into two equal parts: a calibration set (50\% of the original test data) and a final test set (the remaining 50\%).

Conformal sets are constructed sequentially. At step $t=1$, for the first selected writer, we extract the calibration scores $S_1^1,...,S_{n_1}^1$, where $n_1$ is the number of samples from that writer in the calibration set. Using these scores, we construct a conformal prediction set following (\ref{bav-cs}). Next, we sample a test point $S_{n_1+1}^1$ from the final test set corresponding to the same writer and evaluate the e-variable $E_1$. We repeat this process for each subsequent writer ($t=2,3,...,50$).

The choice of the score function $S$ is key to obtaining informative conformal sets. In particular, $S$ should be chosen so that the martingale $M_t$ in Theorem \ref{thm:martingale} does not decay to zero too quickly. When $M_t$ is close to zero, the conformal set $\hat{C}_{t+1}$ expands significantly, as seen in (\ref{bav-cs}). To mitigate this, we define the score function as
\[
S(X,Y)=\frac{1}{p_f(Y|X)^{1/4}},
\]
where $X$ is an image, $Y\in{1,...,62}$ is a possible label, and $p_f(Y|X)$ is the probability assigned to $Y$ by the pre-trained model $f$. This choice ensures that lower predicted probabilities correspond to higher scores, aligning with the intuition that less certain predictions should be assigned greater importance. The exponent $1/4$ further sharpens the scores by amplifying the impact of particularly low-confidence predictions. We set the exponent to $1/4$ as it empirically yields favorable martingale dynamics, preventing $M_t$ from collapsing too quickly to zero in practice. This choice is based on its consistent performance on our dataset, though a deeper theoretical understanding remains an open question.

We set the coverage level to $\alpha=0.15$. The experiment is repeated 100 times, resampling both the calibration and final test sets at each iteration. Averaging over all runs, we obtain an empirical coverage of 0.94, exceeding the target level of $1-\alpha=0.85$.

\begin{figure}[h!]
    \centering
    \includegraphics[width=0.78\textwidth]{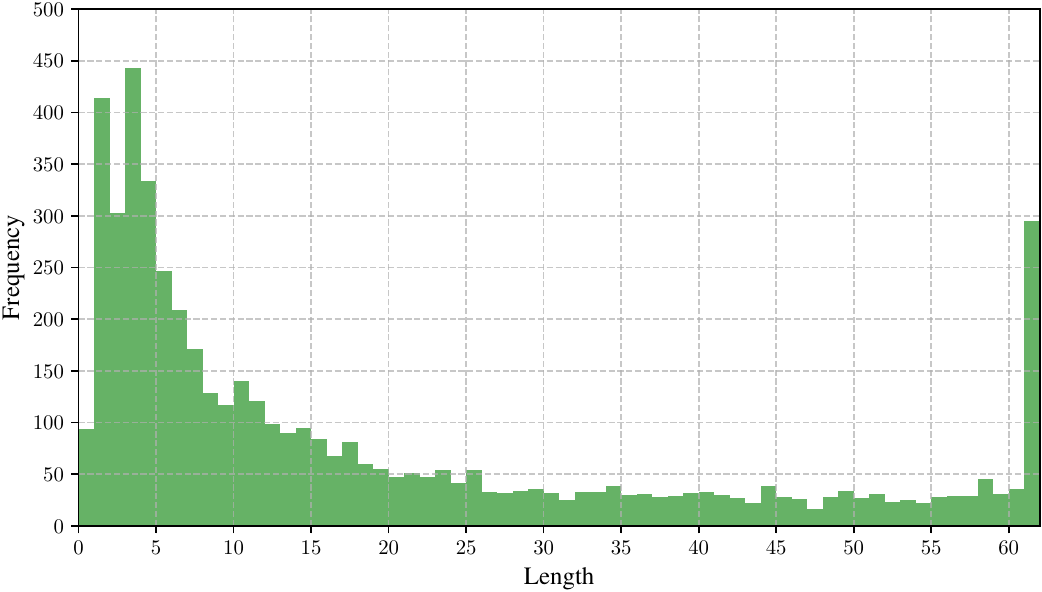}
    \caption{Histogram of conformal set sizes obtained across all $T=50$ data batches and 100 iterations. The distribution illustrates the variability in set sizes, highlighting the proportion of informative and trivial sets.}
    \label{fig:hist-anytime-valid}
\end{figure}

This over-coverage can be attributed to several factors. First, Ville’s inequality is not necessarily tight, as is the case for Markov’s inequality. Second, the coverage guarantee is anytime-valid, meaning it must hold over an infinite sequence of batches. In contrast, our experiments are limited to $T=50$ batches due to the finite amount of data available, which naturally leads to a higher observed coverage in practice.

This over-coverage is not necessarily a drawback, provided the conformal sets remain reasonably small and informative. Figure \ref{fig:hist-anytime-valid} shows the distribution of conformal set sizes across $T=50$ data batches and 100 iterations. Most sets are relatively compact and convey meaningful information. However, a limitation of this method is that some sets can be trivial---in this case, those of size 62. Despite this, most remain informative and non-trivial, underscoring the approach’s practicality. It is worth noting that the method is model-agnostic and ensures coverage but does not control conformal set size; stronger models typically yield smaller sets. Appendix \ref{app:ville} provides a more detailed breakdown of conformal set sizes per batch, along with examples of batch anytime-valid conformal set~sequences.

\section{Fixed-Size Conformal Sets with Data-Dependent Coverage}
\label{sec:post-hoc}

Compared to p-variables, on which most existing conformal prediction methods are based, e-variables offer a more generalized form of coverage: they enable data-dependent coverage. This is why it is somewhat unfair to directly compare conformal p-prediction and conformal e-prediction, using a fixed coverage level~$\alpha$. E-variables provide a stronger guarantee, known as post-hoc validity, which allows for more flexible and adaptive inference. Further details on post-hoc validity and its connection to e-variables can be found in the following works: \cite{wang2022fdr, xu2024postselectioninference,grunwald2024beyond,ramdas2024hypothesistestingevalues,koning2024posthocalphahypothesistesting}. Our discussion will specifically draw from the work by \cite{koning2024posthocalphahypothesistesting}.

\subsection{Theoretical framework}

\begin{definition}[Post-hoc p-variable, Definition 2 from \cite{koning2024posthocalphahypothesistesting}]
\label{def:post_hoc_p-var}
    We say that a nonnegative random variable $P$ is a \emph{post-hoc p-variable} if
    \[
    \underset{\tilde{\alpha}}{\sup} \ \mathbb{E}\left[ \frac{\mathbb{P}(P \le \tilde{\alpha} \mid \tilde{\alpha})}{\tilde{\alpha}} \right] \le 1,
    \]
    where the supremum is over every random variable $\tilde{\alpha} > 0$ that may not be independent from $P$.
\end{definition}
P-variables are nonnegative random variables that satisfy the condition that their size distortion, $\frac{\mathbb{P}(P \le \alpha)}{\alpha}$, for a given coverage level $\alpha$, remains below 1. When the coverage is data-dependent, the size distortion itself becomes a random variable. The interpretation of Definition \ref{def:post_hoc_p-var} of \emph{post-hoc} p-variables is that certain distortions are permitted, provided they remain controlled in expectation. Remarkably, post-hoc p-variables are exactly the inverses of e-variables:

\begin{theorem}[Theorem 2 from \cite{koning2024posthocalphahypothesistesting}]
\label{thm:post-hoc-iff-e-var}
    $P$ is a post-hoc p-variable if and only if $\mathbb{E}[1/P] \le 1$.
\end{theorem}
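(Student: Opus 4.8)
The plan is to prove both implications directly from Definition~\ref{def:post_hoc_p-var}, treating the ``if'' direction as the easy one and the ``only if'' direction as the place where the real work lies.

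For the ``if'' direction, suppose $\mathbb{E}[1/P] \le 1$. I would take an arbitrary random variable $\tilde{\alpha}$ taking values in $(0,1)$, possibly dependent on $P$, and bound the inner conditional probability using a conditional Markov inequality: since $\mathbf{1}\{P \le \tilde\alpha\} \le \tilde\alpha / P$ pointwise (because on the event $P \le \tilde\alpha$ the right-hand side is at least $1$, and it is always nonnegative), we get $\mathbb{P}(P \le \tilde\alpha \mid \tilde\alpha) \le \tilde\alpha \, \mathbb{E}[1/P \mid \tilde\alpha]$ almost surely. Dividing by $\tilde\alpha$ and taking expectations, the tower property gives $\mathbb{E}[\mathbb{P}(P \le \tilde\alpha \mid \tilde\alpha)/\tilde\alpha] \le \mathbb{E}[\mathbb{E}[1/P \mid \tilde\alpha]] = \mathbb{E}[1/P] \le 1$. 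Taking the supremum over $\tilde\alpha$ preserves the bound, so $P$ is a post-hoc p-variable.

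For the ``only if'' direction, suppose $P$ is a post-hoc p-variable; I want to conclude $\mathbb{E}[1/P] \le 1$. The natural idea is to exhibit a single choice of $\tilde\alpha$ — a measurable function of $P$ — for which the quantity $\mathbb{E}[\mathbb{P}(P \le \tilde\alpha \mid \tilde\alpha)/\tilde\alpha]$ recovers $\mathbb{E}[1/P]$, or at least approaches it in a limit. The obvious candidate is $\tilde\alpha = P$ (or $\tilde\alpha = \min(P, 1-\varepsilon)$ to stay strictly inside $(0,1)$, then let $\varepsilon \to 0$), since then $\mathbb{P}(P \le \tilde\alpha \mid \tilde\alpha) = \mathbb{P}(P \le P \mid P) = 1$ and the expression becomes $\mathbb{E}[1/P]$. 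Care is needed here: $\tilde\alpha$ must be a genuine $(0,1)$-valued random variable, so if $P$ can take values $\ge 1$ one should truncate, say $\tilde\alpha = \min(P, 1-\varepsilon)$, and check that on $\{P \ge 1-\varepsilon\}$ the contribution $\mathbb{P}(P \le \tilde\alpha \mid \tilde\alpha)/\tilde\alpha = 1/(1-\varepsilon)$ times $\mathbb{P}(P \ge 1-\varepsilon)$ stays bounded, then pass to the limit via monotone convergence on the event $\{P < 1-\varepsilon\}$. One must also handle the event $\{P = 0\}$: if $\mathbb{P}(P = 0) > 0$ then $\mathbb{E}[1/P] = \infty$, but also one can pick $\tilde\alpha$ to make the post-hoc quantity blow up, contradicting the hypothesis, so $P > 0$ almost surely.

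The main obstacle is the ``only if'' direction, specifically making the truncation-and-limit argument rigorous: one needs the event $\{P \le \tilde\alpha\}$ to be exactly $\{P \le \min(P,1-\varepsilon)\} = \{P \le 1 - \varepsilon\} \cup \{P \le P\}$, and to verify that the conditional probability $\mathbb{P}(P \le \tilde\alpha \mid \tilde\alpha)$ evaluates to $1$ on the relevant event despite $\tilde\alpha$ being $\sigma(P)$-measurable (so the conditioning is on essentially all of $P$'s information). A clean way to sidestep subtleties is to note that when $\tilde\alpha$ is $\sigma(P)$-measurable, $\mathbb{P}(P \le \tilde\alpha \mid \tilde\alpha) = \mathbf{1}\{P \le \tilde\alpha\}$ almost surely, so the post-hoc quantity is simply $\mathbb{E}[\mathbf{1}\{P \le \tilde\alpha\}/\tilde\alpha]$; then with $\tilde\alpha = \min(P, 1-\varepsilon)$ this equals $\mathbb{E}[\mathbf{1}\{P \le 1-\varepsilon\}/P] + \mathbb{E}[\mathbf{1}\{P > 1-\varepsilon\}/(1-\varepsilon)]$, and letting $\varepsilon \downarrow 0$ with monotone convergence yields $\mathbb{E}[1/P] \le 1$, possibly after also observing $\mathbb{E}[\mathbf{1}\{P\ge 1\}/(1-\varepsilon)] \to \mathbb{E}[\mathbf{1}\{P \ge 1\}] \le \mathbb{E}[\mathbf{1}\{P \ge 1\}/P]$. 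I expect the whole argument to fit in under a page once the measurability bookkeeping is pinned down.
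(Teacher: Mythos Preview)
The paper does not prove this theorem; it is quoted verbatim from \cite{koning2024posthocalphahypothesistesting} and used as a black box, so there is no in-paper argument to compare your proposal against.

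On the merits of your sketch: the ``if'' direction is clean and correct. In the ``only if'' direction, your overall strategy---take $\tilde\alpha$ to be $P$ itself, truncated into $(0,1)$---is the natural one, but the displayed decomposition is wrong. With $\tilde\alpha = \min(P,1-\varepsilon)$, on the event $\{P > 1-\varepsilon\}$ we have $\tilde\alpha = 1-\varepsilon < P$, so $\mathbf{1}\{P \le \tilde\alpha\} = 0$ there; hence
\[
\mathbb{E}\!\left[\frac{\mathbf{1}\{P\le\tilde\alpha\}}{\tilde\alpha}\right] = \mathbb{E}\!\left[\frac{\mathbf{1}\{P \le 1-\varepsilon\}}{P}\right],
\]
with no second term $\mathbb{E}[\mathbf{1}\{P>1-\varepsilon\}/(1-\varepsilon)]$. (Relatedly, your claim that $\mathbb{P}(P \le \tilde\alpha \mid \tilde\alpha) = \mathbf{1}\{P \le \tilde\alpha\}$ whenever $\tilde\alpha$ is $\sigma(P)$-measurable requires $\sigma(\tilde\alpha)=\sigma(P)$, which fails where $\min(\cdot,1-\varepsilon)$ is not injective; the identity $\mathbb{E}[\mathbb{P}(P\le\tilde\alpha\mid\tilde\alpha)/\tilde\alpha]=\mathbb{E}[\mathbf{1}\{P\le\tilde\alpha\}/\tilde\alpha]$ does hold by the tower property, but the pointwise one does not.) Letting $\varepsilon\downarrow 0$ therefore only gives $\mathbb{E}[\mathbf{1}\{P<1\}/P]\le 1$, strictly weaker than $\mathbb{E}[1/P]\le 1$ when $\mathbb{P}(P\ge 1)>0$; your final inequality $\mathbb{E}[\mathbf{1}\{P \ge 1\}] \le \mathbb{E}[\mathbf{1}\{P \ge 1\}/P]$ is also reversed. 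To close the gap you need either the standing assumption $P\le 1$ a.s.\ (standard for p-variables, and the regime in which the cited result is stated), or to let $\tilde\alpha$ range over all positive random variables, in which case $\tilde\alpha = P$ works directly without truncation.
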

Therefore, conformal e-prediction methods yield data-dependent coverage guarantees, in the following sense:

\begin{proposition}
    Consider a calibration set $\{(X_i,Y_i)\}_{i=1,\dots,n}$ and a test data point $(X_{n+1},Y_{n+1})$ such that $(X_1,Y_1),\dotsc,(X_n,Y_n),(X_{n+1},Y_{n+1})$ are exchangeable. Let $\tilde{\alpha}$ be any coverage level that may depend on this data. Then we have that:
    \begin{equation}
    \label{post-hoc}
    \mathbb{E}\left[ \frac{\mathbb{P}(Y_{n+1} \not\in \hat{C}_n^{\tilde{\alpha}}(X_{n+1}) \mid \tilde{\alpha})}{\tilde{\alpha}} \right] \le 1,    
    \end{equation}
    where 
    \begin{equation}
    \label{eq:post-hoc-cs}
    \hat{C}_n^{\tilde{\alpha}}(x) := \left\{y :     \frac{S(x,y)}{\frac{1}{n+1}\left(\sum_{i=1}^{n}S(X_i,Y_i) + S(x,y) \right)} < 1/\tilde{\alpha} \right\}.
    \end{equation}
\end{proposition}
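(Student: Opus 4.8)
The plan is to recognize the non-coverage event as a tail event for an e-variable and then invoke Theorem~\ref{thm:post-hoc-iff-e-var} together with Definition~\ref{def:post_hoc_p-var}. Write $E := \frac{S(X_{n+1},Y_{n+1})}{\frac{1}{n+1}\left(\sum_{i=1}^{n}S(X_i,Y_i) + S(X_{n+1},Y_{n+1})\right)}$, which is precisely the e-variable of Equation~(\ref{def_e_var}) with $S_i = S(X_i,Y_i)$; since the scores are positive, $E$ is well-defined and strictly positive almost surely, and under exchangeability $\mathbb{E}[E] \le 1$ (in fact $=1$), as recalled in the excerpt. Inspecting the definition~(\ref{eq:post-hoc-cs}) of $\hat{C}_n^{\tilde{\alpha}}(x)$, one sees that $Y_{n+1} \notin \hat{C}_n^{\tilde{\alpha}}(X_{n+1})$ holds if and only if $E \ge 1/\tilde{\alpha}$, i.e.\ if and only if $P \le \tilde{\alpha}$ where $P := 1/E$.

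First I would substitute this equivalence into the left-hand side of~(\ref{post-hoc}) to rewrite it as $\mathbb{E}\!\left[\frac{\mathbb{P}(P \le \tilde{\alpha} \mid \tilde{\alpha})}{\tilde{\alpha}}\right]$. Next, since $\mathbb{E}[1/P] = \mathbb{E}[E] \le 1$, Theorem~\ref{thm:post-hoc-iff-e-var} implies that $P$ is a post-hoc p-variable. Finally, applying Definition~\ref{def:post_hoc_p-var} — whose supremum ranges over \emph{all} (possibly dependent) random variables $\tilde{\alpha} \in (0,1)$, and in particular over the data-dependent level at hand — bounds the rewritten expression by $1$, which is exactly the claim.

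The individual steps are short, so the ``hard part'' here is a matter of care rather than of depth. Three points deserve attention: (i) getting the direction of the inequality right at the boundary — the set in~(\ref{eq:post-hoc-cs}) uses the strict inequality $<1/\tilde{\alpha}$, so its complement is the non-strict event $\{E \ge 1/\tilde{\alpha}\} = \{P \le \tilde{\alpha}\}$, which matches the non-strict ``$\le$'' appearing in Definition~\ref{def:post_hoc_p-var}; (ii) checking that our $\tilde{\alpha}$ is an admissible choice in the supremum of Definition~\ref{def:post_hoc_p-var}, which it is precisely because that supremum explicitly permits dependence on $P$; and (iii) the measurability and well-definedness of the conditional probability $\mathbb{P}(P \le \tilde{\alpha} \mid \tilde{\alpha})$ and of $\tilde{\alpha}$ as a function of the data, which we take for granted in the same manner as the cited post-hoc framework. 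It is also worth flagging that strict positivity of the scores is what makes $P = 1/E$ finite and rules out any division-by-zero in $E$, so the chain of equivalences holds almost surely; absent that assumption one would need to treat the degenerate events separately.
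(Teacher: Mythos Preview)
Your proof is correct and follows essentially the same route as the paper: define $E$ as in~(\ref{def_e_var}), set $P=1/E$, invoke Theorem~\ref{thm:post-hoc-iff-e-var} to conclude that $P$ is a post-hoc p-variable, and identify the non-coverage event $\{Y_{n+1}\notin \hat{C}_n^{\tilde{\alpha}}(X_{n+1})\}$ with $\{P\le\tilde{\alpha}\}$. Your extra care about the strict/non-strict inequality, admissibility of $\tilde{\alpha}$ in the supremum, and positivity of the scores is well placed but not something the paper spells out.
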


\begin{proof}
    Recall that the random variable 
    \[
    E = \frac{S(X_{n+1},Y_{n+1})}{\frac{1}{n+1}\sum_{i=1}^{n+1}S(X_i,Y_i)}
    \]
    defined in Equation (\ref{def_e_var}) is an e-variable \citep[see][]{balinsky2024EnhancingCP}. Therefore, by Theorem \ref{thm:post-hoc-iff-e-var}, $P = 1/E$ is a post-hoc p-variable. The only thing left to see is that
    \begin{align*}
        P \le \tilde{\alpha} &\Longleftrightarrow  E \ge 1/\tilde{\alpha}\\
        &\Longleftrightarrow \frac{S(X_{n+1},Y_{n+1})}{\frac{1}{n+1}\sum_{i=1}^{n+1}S(X_i,Y_i)}\ge 1/\tilde{\alpha}\\
        &\Longleftrightarrow Y_{n+1} \not\in \hat{C}_n^{\tilde{\alpha}}(X_{n+1}) 
    \end{align*}
    by definition of $\hat{C}_n^{\tilde{\alpha}}$.
\end{proof}
In practice, practitioners can define a random variable $\tilde{\alpha}$ that depends only on the observed data $\{(X_i,Y_i)\}_{i=1,\dots,n}$ and $X_{n+1}$, but not on $Y_{n+1}$. When $\tilde{\alpha} = \alpha$ is a constant that does not depend on data, the guarantee (\ref{post-hoc}) can be rewritten as 
\[
\mathbb{P}(Y_{n+1} \not\in \hat{C}_n^{\alpha}(X_{n+1})) \le \alpha,
\]
which is exactly (\ref{cp_goal}). More generally, when $\tilde{\alpha}$ is well-concentrated around its mean, we can use a first-order Taylor expansion to obtain:
\[
\mathbb{E}\left[ \frac{\mathbb{P}(Y_{n+1} \not\in \hat{C}_n^{\tilde{\alpha}}(X_{n+1}) \mid \tilde{\alpha})}{\tilde{\alpha}} \right] \approx \frac{ \mathbb{E}[ \mathbb{P}(Y_{n+1} \not\in \hat{C}_n^{\tilde{\alpha}}(X_{n+1}) \mid \tilde{\alpha}) ]}{\mathbb{E}[\tilde{\alpha}]} = \frac{ \mathbb{P}(Y_{n+1} \not\in \hat{C}_n^{\tilde{\alpha}}(X_{n+1}))}{\mathbb{E}[\tilde{\alpha}]}
\]
and derive meaningful guarantees of the form:
\begin{equation}
\label{data-dependent1}
    \mathbb{P}(Y_{n+1} \in \hat{C}_n^{\tilde{\alpha}}(X_{n+1})) \ge 1 - \mathbb{E}[\tilde{\alpha}].
\end{equation}
Since $\mathbb{E}[\tilde{\alpha}]$ is not known in practice, we can use concentration inequalities to obtain useful guarantees. For example, assume that $\tilde{\alpha}$ is $\sigma$-sub-Gaussian. Then, $\tilde{\alpha}$ satisfies the deviation inequality $\mathbb{P}(\tilde{\alpha} \le \mathbb{E}[\tilde{\alpha}] -t) \le \exp\left(-\frac{t^2}{2\sigma^2}\right)$ for all $t \in \mathbb{R}$, and we obtain guarantees of the~form:
\begin{equation}
\label{data-dependent2}
    \mathbb{P}(Y_{n+1} \in \hat{C}_n^{\tilde{\alpha}}(X_{n+1})) \ge 1 - \tilde{\alpha} - \sigma\sqrt{2\log(1/\delta)},
\end{equation}
with probability at least $1-\delta$.

Therefore, e-variables can be utilized to derive conformal prediction guarantees that adapt to the data. This enables, for instance, the construction of conformal sets with a specified size. In the classification setting, suppose we aim to obtain conformal sets of size (at most) $C$. Traditional conformal prediction methods do not allow for such control. However, by leveraging e-variables, we can define
\begin{equation}
\label{eq:def-alpha-tilde}
\tilde{\alpha} := \inf \left\{ \alpha \in (0,1) : \# \left\{y : \frac{S(X_{n+1},y)}{\frac{1}{n+1}\left(\sum_{i=1}^{n}S(X_i,Y_i)+S(X_{n+1},y)\right)} < 1/\alpha \right\} \le C \right\}
\end{equation}
and establish guarantees of the form (\ref{post-hoc}), (\ref{data-dependent1}), and (\ref{data-dependent2}), while ensuring that the conformal sets are of size (at most) $C$. This approach enables practitioners to make informed decisions based on the observed $\tilde{\alpha}$. It is only feasible with e-variables, as they provide post-hoc guarantees in contrast with the fixed-level $\alpha$ guarantees of conformal p-prediction.

\subsection{Experiments}

We use the same dataset and the same pre-trained model $f$ as in Section \ref{sec:ville}. For our experiments, we randomly sample a calibration set of size 2000 from the test set and select a test data point $(X_{n+1},Y_{n+1})$ from the remaining data points. Our goal is to obtain conformal sets of size (at most) $C$, with a data-dependent coverage. To achieve this, we determine $\tilde{\alpha}$ based on Equation (\ref{eq:def-alpha-tilde}), that depends on the calibration set and the test feature $X_{n+1}$. For ease of implementation, we define a discrete set of candidate values: $\mathbb{A}=\{0.01,0.02,0.03,\dots,0.29,0.3\}$, and select
\begin{equation}
\label{eq:def-alpha-tilde-A}
\tilde{\alpha} := \inf \left\{ \alpha \in \mathbb{A} : \# \left\{y : \frac{S(X_{n+1},y)}{\frac{1}{n+1}\left(\sum_{i=1}^{n}S(X_i,Y_i)+S(X_{n+1},y)\right)} < 1/\alpha \right\} \le C \right\}.
\end{equation}

\begin{figure}[h]
    \centering
    \subfloat[\( C = 3 \)]{
        \includegraphics[width=0.48\linewidth]{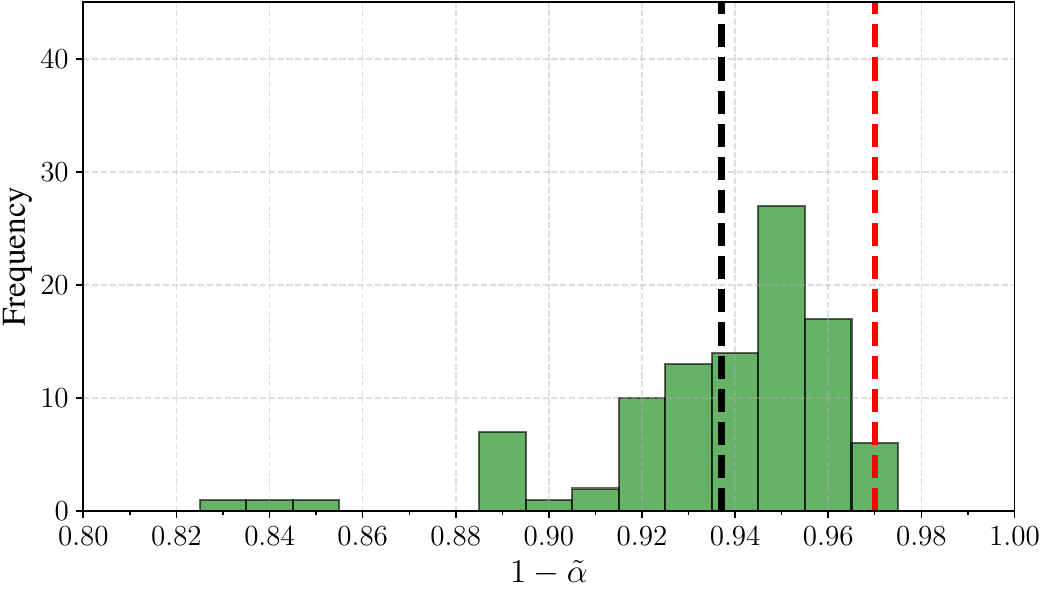}
    }
    \hfill
    \subfloat[\( C = 5 \)]{
        \includegraphics[width=0.48\linewidth]{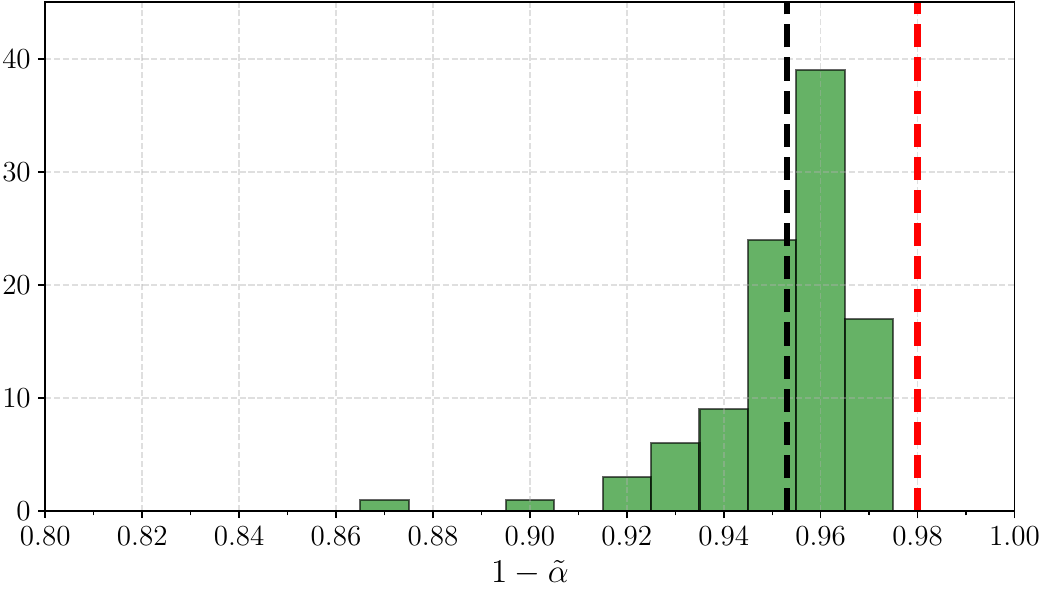}
    }
    \caption{Histogram of $1-\tilde{\alpha}$ for \( C = 3 \) and \( C = 5 \), computed across 100 iterations. The black dashed line represents the expected coverage level, $1-\mathbb{E}[\tilde{\alpha}]$, while the red dashed line corresponds to the empirical coverage probability, $\mathbb{P}(Y_{n+1} \in \hat{C}_n^{\tilde{\alpha}}(X_{n+1}))$, both estimated over the 100 iterations.}
    \label{fig:post-hoc-alpha}
\end{figure}

The score function we use is the cross-entropy, formally defined in Equation (\ref{cross-entropy}). We average the results over 100 iterations, each with a randomly sampled calibration set. In each iteration, we compute a data-dependent coverage level $\tilde{\alpha}$ and construct conformal sets of size (at most) $C$ using (\ref{eq:post-hoc-cs}). The values of $\tilde{\alpha}$ for $C=3$ and $C=5$ are shown in Figure \ref{fig:post-hoc-alpha}. Empirically, we observe that inequality (\ref{data-dependent1}) holds, supporting the first-order Taylor approximation used in our method to derive guarantees (\ref{data-dependent1}) and (\ref{data-dependent2}). We provide examples of the obtained conformal sets and illustrate how their size evolves as $\alpha$ varies in Appendix~\ref{app:post-hoc}.

\begin{figure}[h!]
    \centering
    \includegraphics[width=\textwidth]{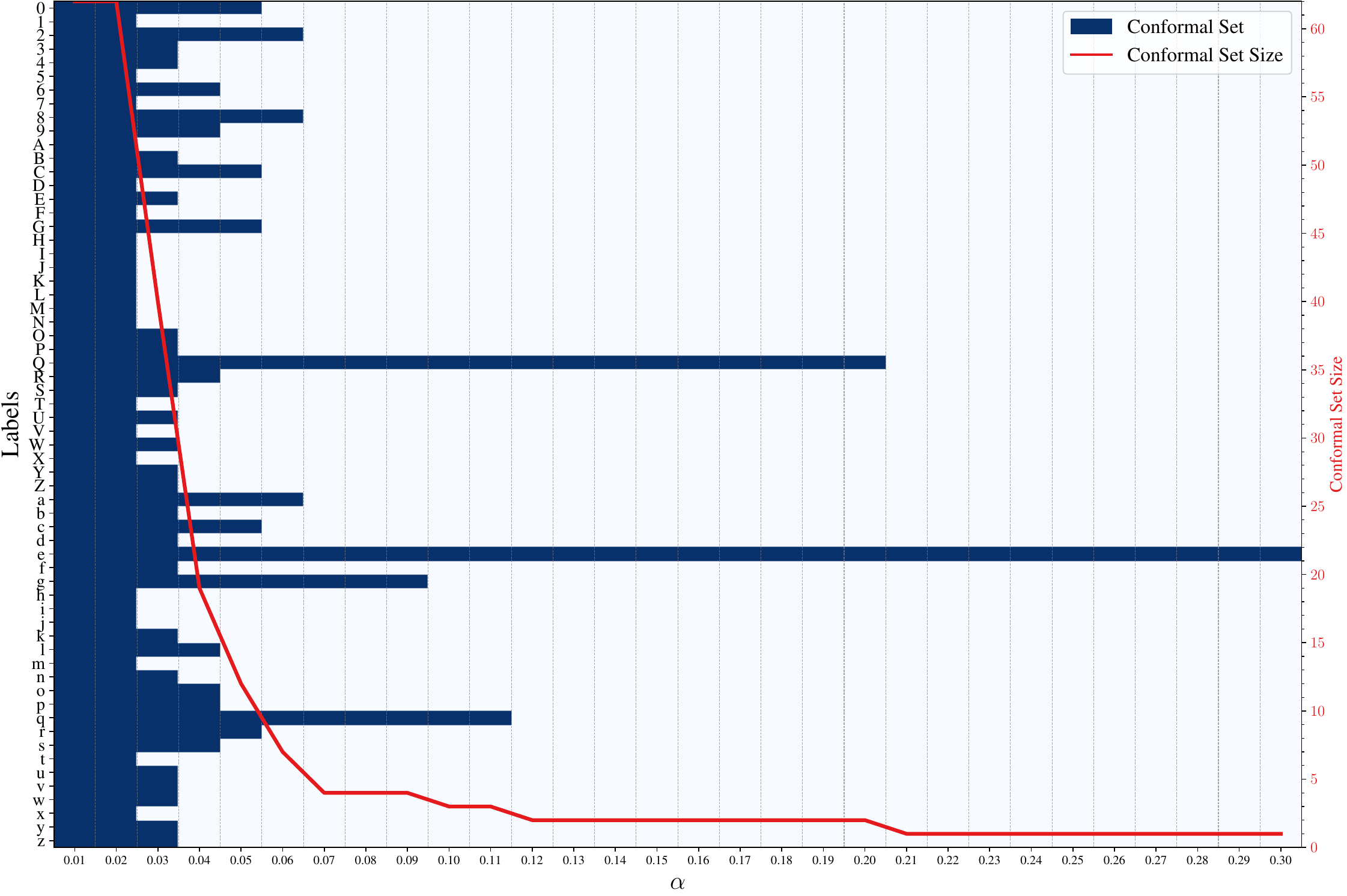}
    \caption{Example of conformal sets obtained with varying $\tilde{\alpha}$.}
    \label{fig:example3}
\end{figure}

This method introduces a conceptual inversion of conformal prediction: instead of fixing a coverage level and potentially obtaining conformal sets of impractical size, we set a maximum conformal set size threshold $C$ up front. We then obtain guarantees for an observed coverage level $\tilde{\alpha}$, which is data-dependent and adaptive in the sense that it varies with the test feature $X_{n+1}$. This allows practitioners to make informed decisions based on the observed coverage while ensuring that the resulting conformal sets remain meaningful.

We present a sample result in Figure \ref{fig:example3}. Specifically, this sample corresponds to a random draw of the calibration set and test data point. We construct the associated conformal set for each $\alpha \in \mathbb{A}$ using (\ref{eq:def-alpha-tilde-A}). The x-axis represents the different values of $\alpha$, while the y-axis corresponds to the possible labels. For each $\alpha$, the conformal sets are shown as dark blue cells. Additionally, we plot in red the size of the conformal sets as a function of $\alpha$.

The method allows us to select $\tilde{\alpha}$ based on the calibration data and test image $X_{n+1}$. For instance, if we aim to obtain conformal sets with a size (at most) $C=5$, applying Equation (\ref{eq:def-alpha-tilde}) yields $\tilde{\alpha}=0.07$ in the example of Figure \ref{fig:example3}. We provide additional sample results in Appendix \ref{app:post-hoc}.

\section{Conformal Prediction under Ambiguous Ground Truth}
\label{sec:ambiguous-ground-truth}

The conformal e-prediction framework also provides valid guarantees in Monte Carlo conformal prediction, a problem introduced by \cite{stutz2023conformal}. When the ground truth is ambiguous and $m \ge 1$ labels are sampled from $\mathbb{P}_{Y|X_i}$ for each feature $X_i$, it is possible to account for all $m$ labels while obtaining conformal prediction guarantees. Instead of selecting a single label and discarding the rest to enforce exchangeable data, an approach that leverages all $m$ labels reduces variability in the conformal prediction guarantees. \cite{stutz2023conformal} propose a method to achieve this based on a classical result by \cite{ruschendorf1982pvalues} and later independently established by \cite{meng1994pvalues} on averaging arbitrary p-variables, which yields a p-variable up to a factor of 2. Consequently, this approach ensures a coverage level of $1-2\alpha$, with this factor of 2 being theoretically unavoidable. Constructing conformal sets using e-variables, however, guarantees a theoretical coverage of at least $1-\alpha$.

\subsection{Monte Carlo conformal p-prediction \citep{stutz2023conformal}}

We will write $S_i^j := S(X_i,Y_i^j)$ for $i=1,...,n$ and $j=1,...,m$. We will also write $S_{n+1} := S(X_{n+1},Y_{n+1})$. We reformulate the main result by \cite{stutz2023conformal}, adapted for negatively-oriented scores instead of positively-oriented ones, which is the primary focus of their paper.

\begin{theorem}
\label{thm:mc-cp_p-var}
    The following conformal set:
    \begin{align}
    \label{cs_mc-cp-p}
    \hat{C}_n(x) &:= \left\{y : \frac{1}{mn} \sum_{j=1}^m\sum_{i=1}^n \mathbb{1}_{\{S_i^j \le S(x,y)\}} \le \frac{\lceil m(1-\alpha)(n+1)\rceil-1}{mn} \right\}\\
    &= \left\{y : S(x,y) \le \text{\emph{quantile}}\left(\frac{\lceil m(1-\alpha)(n+1)\rceil-1}{mn} ; \frac{1}{mn}\sum_{j=1}^m\sum_{i=1}^n \delta_{S_i^j} \right) \right\} 
    \end{align}
    with $\delta_z$ denoting a point mass at $z$, satisfies property (\ref{cp_goal}) with coverage $1-2\alpha$ instead of~$1-\alpha$:
    \[
    \mathbb{P}(Y_{n+1} \in \hat{C}_n(X_{n+1})) \geq 1 - 2\alpha.
    \]
\end{theorem}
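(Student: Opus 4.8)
Looking at this theorem, I need to prove that the Monte Carlo conformal set with coverage $1-2\alpha$ is valid, building on the classical p-variable averaging result.

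**Proof plan:**

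The plan is to construct a p-variable from the data and show it dominates the relevant event, then apply the $\rho$-averaging bound. First I would define, for each expert label $j$, the quantity $P^j := \frac{1}{n+1}\left(1 + \sum_{i=1}^n \mathbb{1}_{\{S_i^j \le S_{n+1}\}}\right)$ (or its appropriate variant). Since $(X_1, Y_1^j), \dots, (X_n, Y_n^j), (X_{n+1}, Y_{n+1})$ is exchangeable for each \emph{fixed} $j$ — because $Y_i^j \sim \mathbb{P}_{Y|X_i}$ independently across $i$ and $Y_{n+1} \sim \mathbb{P}_{Y|X_{n+1}}$ — each $P^j$ is a valid (conditional, then marginal) p-variable in the usual conformal sense: $\mathbb{P}(P^j \le u) \le u$ for the relevant grid of $u$. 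The subtlety is that we are conditioning on $S_{n+1}$ in defining the set, so I would be careful to work with the rank of $S_{n+1}$ among $\{S_i^j\}_{i=1}^n$ and invoke exchangeability of the augmented sample $(S_1^j, \dots, S_n^j, S_{n+1})$.

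Next I would average: let $\bar{P} := \frac{1}{m}\sum_{j=1}^m P^j$. By the Rüschendorf–Meng result on arithmetic averaging of arbitrarily dependent p-variables, $2\bar{P}$ is a p-variable, i.e., $\mathbb{P}(2\bar{P} \le \alpha) \le \alpha$, equivalently $\mathbb{P}(\bar{P} \le \alpha/2) \le \alpha$ — wait, I need to get the direction right: the statement is $\mathbb{P}(\bar P \le u) \le 2u$, so $\mathbb{P}(\bar P \le \alpha) \le 2\alpha$. Then I would verify the key set-membership equivalence: $Y_{n+1} \notin \hat{C}_n(X_{n+1})$ iff $\frac{1}{mn}\sum_{j,i}\mathbb{1}_{\{S_i^j \le S_{n+1}\}} > \frac{\lceil m(1-\alpha)(n+1)\rceil - 1}{mn}$, which after rearranging with the $+1/(n+1)$ offsets should translate into $\bar{P} < $ (something like) $\alpha$ or $1-\bar P \ge 1-\alpha$. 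Actually since scores are negatively oriented and $\hat C$ keeps \emph{small} scores, non-coverage means $S_{n+1}$ is large, i.e., the empirical CDF value is large, i.e., $\bar P$ (built from $\mathbb{1}_{\{S_i^j \le S_{n+1}\}}$) is large — so I'd instead want a p-variable that is \emph{small} when $S_{n+1}$ is large, which means using $\mathbb{1}_{\{S_i^j \ge S_{n+1}\}}$ or equivalently $1 - (\text{empirical CDF})$. I would set $P^j := \frac{1}{n+1}\left(1 + \#\{i : S_i^j \ge S_{n+1}\}\right)$-type object and track the ceiling/floor bookkeeping carefully so that $\{Y_{n+1} \notin \hat C_n(X_{n+1})\} \subseteq \{\bar P \le \alpha\}$.

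The main obstacle I expect is the ceiling-function bookkeeping: matching the threshold $\frac{\lceil m(1-\alpha)(n+1)\rceil - 1}{mn}$ exactly to the event $\bar{P} \le \alpha$ requires careful handling of integer rounding, the $+1$ offsets in each $P^j$, and the $1/(n+1)$ versus $1/n$ normalizations — getting an off-by-one wrong would break validity. A secondary point to handle cleanly is that each $P^j$ is only a p-variable in the sense of satisfying $\mathbb{P}(P^j \le k/(n+1)) \le k/(n+1)$ on the lattice (super-uniform), and I must confirm the Rüschendorf–Meng averaging bound applies to super-uniform p-variables (it does — the result only needs $\mathbb{E}[\mathbb{1}_{\{P^j \le u\}}] \le u$). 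Finally I would conclude $\mathbb{P}(Y_{n+1} \notin \hat C_n(X_{n+1})) \le \mathbb{P}(\bar P \le \alpha) \le 2\alpha$, giving the claimed $1-2\alpha$ coverage, and remark that the factor $2$ is tight by the known optimality of the averaging bound.
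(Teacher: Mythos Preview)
Your proposal is correct and follows essentially the same route as the paper: define per-expert conformal p-variables via exchangeability of $(S_1^j,\dots,S_n^j,S_{n+1})$, average them, and invoke the R\"uschendorf--Meng bound that $2\bar P$ is a p-variable. The paper uses $P_j = \frac{1}{n+1}\bigl(1+\sum_{i=1}^n \mathbb{1}_{\{S_i^j > S_{n+1}\}}\bigr)$ with a strict inequality and, rather than an inclusion, establishes the exact equivalence $\{Y_{n+1}\in\hat C_n(X_{n+1})\}=\{\bar P>\alpha\}$ by the chain of integer manipulations you anticipated; your version with $\ge$ would require slightly different bookkeeping but leads to the same conclusion.
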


\begin{proof}
Let $\alpha \in (0,1)$. We have:
\begin{align*}
    \frac{1}{mn}\sum_{j=1}^m\sum_{i=1}^{n}\mathbb{1}_{\{S_i^j \le S_{n+1}\}} \le \frac{\lceil m(1-\alpha)(n+1)\rceil-1}{mn}
    &\Leftrightarrow\sum_{j=1}^m\sum_{i=1}^{n}\mathbb{1}_{\{S_i^j \le S_{n+1}\}} \le \lceil m(1-\alpha)(n+1)\rceil-1 \\
    &\Leftrightarrow \sum_{j=1}^m\sum_{i=1}^{n}\mathbb{1}_{\{S_i^j \le S_{n+1}\}} < m(1-\alpha)(n+1)\\
    &\Leftrightarrow mn-\sum_{j=1}^m\sum_{i=1}^{n}\mathbb{1}_{\{S_i^j > S_{n+1}\}} < m(1-\alpha)(n+1)\\
    &\Leftrightarrow \sum_{j=1}^m\sum_{i=1}^{n}\mathbb{1}_{\{S_i^j > S_{n+1}\}} > mn - m(1-\alpha)(n+1)\\
    &\Leftrightarrow \frac{1}{m}\sum_{j=1}^m \frac{\sum_{i=1}^n \mathbb{1}_{\{S_i^j>S_{n+1}\}}+1}{n+1} > \alpha,
\end{align*}
and we will show that the last inequality holds with probability $\ge 1-2\alpha$. First, see that for a fixed $j=1,...,m$, the random variables $S_1^j,...,S_n^j,S_{n+1}$ are exchangeable. Therefore, the random variables 
\[
P_j := \frac{\sum_{i=1}^n \mathbb{1}_{\{S_i^j>S_{n+1}\}}+1}{n+1}
\]
are p-variables (see Lemma \ref{lma:cp_p-variables} in Appendix \ref{app:intro-conformal-prediction}). Let $\Bar{P}:= \frac{1}{m} \sum_{j=1}^m P_j$ be the arithmetic mean of these p-variables. Then, by \cite{ruschendorf1982pvalues, meng1994pvalues}, $2\Bar{P}$ is a p-variable: 
\[
    \forall \alpha \in (0,1), \ \mathbb{P}(2\Bar{P}\le \alpha) \le \alpha \Longleftrightarrow \forall \alpha \in (0,1), \  \mathbb{P}(\Bar{P} > \alpha) \ge 1-2\alpha,
\]
which concludes the proof.
\end{proof}
Even if there exist other ways of combining p-variables (see, for example, \cite{vovk2020combining}), the advantage of averaging p-variables using the arithmetic mean is that it preserves the interpretation of conformal sets as quantiles of the empirical cumulative distribution function. However, this approach comes at the cost of achieving coverage of $1-2\alpha$ instead of $1-\alpha$. We now show that moving beyond the traditional framework of conformal p-prediction based on rank-based statistics and instead leveraging e-variables yields conformal sets with coverage $1-\alpha$.

\subsection{Monte Carlo conformal e-prediction}

E-variables are particularly convenient to handle. Specifically, it is evident that if $E_1,...,E_m$ are e-variables, then their arithmetic mean:
    \begin{equation}
    \label{eq:avg-evar}
    \Bar{E} := \frac{1}{m} \sum_{j=1}^m E_j  
    \end{equation}
is also an e-variable.
We keep the same notation as before. For each $j \in \{1,...,m \}$, the random variables $S_1^j,...,S_n^j,S_{n+1}$ are exchangeable so the random variables
\[
E_j := \frac{S_{n+1}}{\frac{1}{n+1}(\sum_{i=1}^{n}S_i^j+S_{n+1})}
\]
are e-variables. By applying Markov's inequality to $\Bar{E}$ defined in (\ref{eq:avg-evar}), we derive a new way to construct conformal sets in Monte Carlo conformal prediction:

\begin{theorem}
    The following conformal set:
    \begin{align}
    \label{cs_mc-cp-e}
        \hat{C}_n(x) &:= \left\{y : \frac{1}{m} \sum_{j=1}^m \frac{S(x,y)}{\frac{1}{n+1}(\sum_{i=1}^n S(X_i,Y_i^j) + S(x,y))} < 1/\alpha \right\}
    \end{align}
    satisfies property (\ref{cp_goal}).
\end{theorem}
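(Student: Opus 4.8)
The strategy is to show that the quantity defining membership in the complement of $\hat{C}_n(x)$ is exactly the event $\{\bar E \geq 1/\alpha\}$ for the averaged e-variable $\bar E$ of \eqref{eq:avg-evar}, and then apply Markov's inequality. First I would recall that for each fixed $j$, the scores $S_1^j,\dots,S_n^j,S_{n+1}$ are exchangeable (this is given in the setup), so by \citet{balinsky2024EnhancingCP} each $E_j = S_{n+1}/\big(\frac{1}{n+1}(\sum_{i=1}^n S_i^j + S_{n+1})\big)$ is an e-variable, hence nonnegative with $\mathbb{E}[E_j] \leq 1$. Since a convex combination of e-variables is an e-variable — nonnegativity is immediate and $\mathbb{E}[\bar E] = \frac{1}{m}\sum_j \mathbb{E}[E_j] \leq 1$ by linearity — the average $\bar E = \frac{1}{m}\sum_{j=1}^m E_j$ is itself an e-variable.

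Next I would unwind the definition of $\hat{C}_n(x)$ evaluated at the test point. Plugging $x = X_{n+1}$ and $y = Y_{n+1}$ into \eqref{cs_mc-cp-e}, the defining inequality becomes $\frac{1}{m}\sum_{j=1}^m \frac{S_{n+1}}{\frac{1}{n+1}(\sum_{i=1}^n S_i^j + S_{n+1})} < 1/\alpha$, which is precisely $\bar E < 1/\alpha$. Therefore $\{Y_{n+1} \in \hat{C}_n(X_{n+1})\} = \{\bar E < 1/\alpha\}$, and by Markov's inequality applied to the nonnegative variable $\bar E$,
\[
\mathbb{P}(Y_{n+1} \in \hat{C}_n(X_{n+1})) = \mathbb{P}(\bar E < 1/\alpha) = 1 - \mathbb{P}(\bar E \geq 1/\alpha) \geq 1 - \alpha\,\mathbb{E}[\bar E] \geq 1 - \alpha,
\]
which is exactly property \eqref{cp_goal}.

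\textbf{Main obstacle.} There is no deep obstacle here; the proof is essentially a bookkeeping exercise once one observes that averaging preserves the e-variable property. The only point requiring a little care is the bridge between the two events: one must check that the placeholder $x, y$ in the set definition, when instantiated at $(X_{n+1}, Y_{n+1})$, reproduces the $E_j$ exactly as defined — in particular that the term $S(x,y)$ appears both in the numerator and inside the average in the denominator in the same way as $S_{n+1}$ does in $E_j$. This is a matter of matching notation, and since each $E_j$ is manifestly nonnegative (the scores are assumed positive) no measure-theoretic subtleties arise in applying Markov's inequality. I would also note in passing that, exactly as in Section \ref{sec:post-hoc}, the same argument with a data-dependent level $\tilde\alpha$ (depending on the calibration scores and $X_{n+1}$ but not $Y_{n+1}$) yields the stronger post-hoc guarantee of the form \eqref{post-hoc}, since $1/\bar E$ is a post-hoc p-variable by Theorem \ref{thm:post-hoc-iff-e-var}.
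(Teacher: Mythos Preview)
Your proposal is correct and mirrors the paper's own argument: the paper observes (in the text immediately preceding the theorem) that each $E_j$ is an e-variable by exchangeability of $S_1^j,\dots,S_n^j,S_{n+1}$, that the arithmetic mean $\bar E$ is therefore an e-variable, and that Markov's inequality applied to $\bar E$ yields the coverage bound. Your identification of $\{Y_{n+1}\in\hat C_n(X_{n+1})\}$ with $\{\bar E<1/\alpha\}$ is exactly the intended bridge, so there is nothing to add.
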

One possible interpretation is as follows: assume one of the $m$ experts, denoted $j_0$, predicts that $E_{j_0}\ge1/\alpha$, meaning that the expert believes that the test point's score is abnormally high. This expert might be incorrect. The conformal set (\ref{cs_mc-cp-e}) smoothes the predictions by averaging all the predictions from expert $j_0$ and the remaining $m-1$ experts.

\subsection{Experiments}

In this section, we base our experiments on the CIFAR-10 dataset \citep{krizhevsky2009learning}. CIFAR-10 is a widely used benchmark dataset in computer vision, consisting of 60,000 color images of size $32\times32$ pixels, categorized into 10 classes: airplane, automobile, bird, cat, deer, dog, frog, horse, ship, and truck. The dataset is divided into 50,000 training images and 10,000 test images, with an equal number of samples per class.

We train an EfficientNetB0 model \citep{tan2019efficientnet} using cross-entropy loss, optimized with stochastic gradient descent. The learning rate is initialized at 0.1 and adjusted with a cosine annealing schedule. Training is conducted over 100 epochs with a batch size of 512, incorporating data augmentation techniques. The resulting classifier $f$ achieves an accuracy of 98.6\% on the training set and 91.1\% on the test set. Once trained, we treat $f$ as a black-box model for generating predictions. 

The Monte Carlo conformal prediction method is particularly relevant in cases where the ground truth is ambiguous. For this reason, we use the CIFAR-10H dataset \citep{peterson2019cifar10h}, which extends CIFAR-10 by incorporating human-labeled annotations for the 10,000 images of the test set. For each image, human annotators provide class labels, offering an additional source of ground truth that can be used to assess model performance in comparison to human judgment. This human-annotated version of CIFAR-10 is particularly useful for tasks where model uncertainty is of interest.

\begin{figure}[h!]
    \centering
    \includegraphics[width=\textwidth]{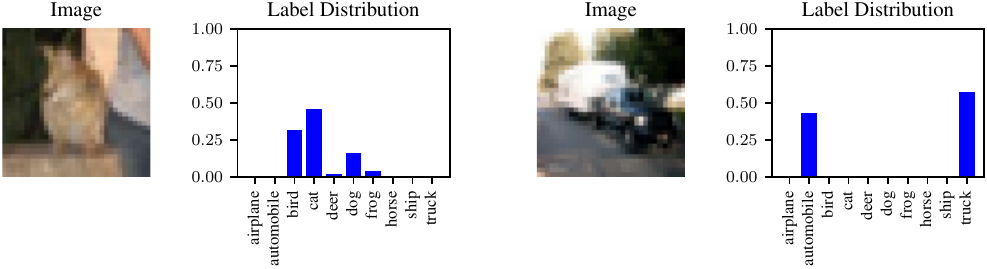}
    \caption{Some images from CIFAR-10H with ambiguous ground truth, along with their label distributions.}
    \label{fig:cifar10h_image}
\end{figure}

We filter the test dataset to include only ambiguous examples. Specifically, we retain only images with label distributions where at least two labels have a probability $\ge 0.1$. This results in a filtered test dataset of size 857. This dataset is then divided into a calibration set comprising 30\% of the data and a final test set with the remaining 70\%.

On the filtered test dataset, the model $f$ achieves an accuracy of 71.41\%. Similar to \cite{stutz2023conformal}, which selects a coverage matching the accuracy of the pre-trained model, we set $\alpha=0.3$ to stay comparable with the base model. We use cross-entropy (\ref{cross-entropy}) as the score function $S$.

Figures \ref{fig:hist-e} and \ref{fig:hist-p} show the coverage and conformal set size obtained with conformal e-prediction and conformal p-prediction respectively, comparing $m=1$ and $m=20$ experts across 200 random splits between the calibration and final test sets. We overlap the histograms for $m=1$ and $m=20$ using a fixed bin width. Increasing the number of experts $m$ reduces the variability in coverage and conformal set sizes observed across calibration and test splits, for both methods. This is consistent with the observation of \cite{stutz2023conformal} in the case of conformal p-prediction. The conformal sets obtained with e-variables exhibit larger coverage than the target $1-\alpha$ and are reasonably larger than the conformal sets obtained with conformal p-prediction. We also note that although Monte Carlo conformal p-prediction has a theoretical coverage of $1-2\alpha$, we empirically observe a coverage of $1-\alpha$, consistent with the empirical findings of \cite{stutz2023conformal}.

\begin{figure}[h!]
    \centering
    \includegraphics[width=\textwidth]{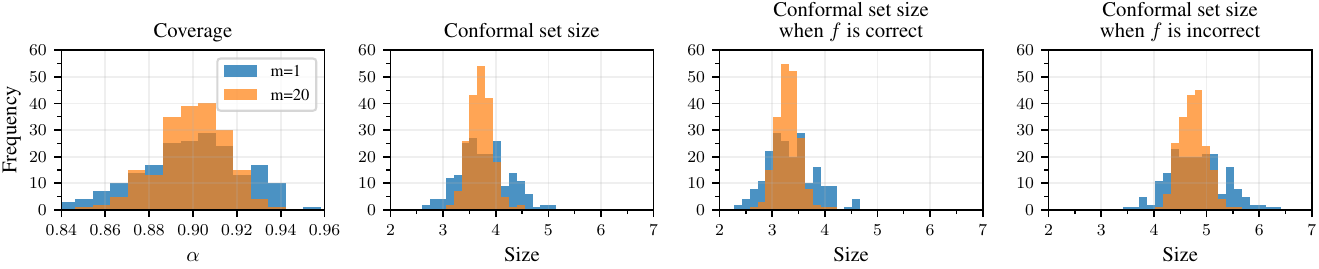}
    \caption{Comparison of coverage and conformal set sizes when using e-variables in Monte Carlo conformal prediction with $m=1$ or $m=20$ experts, with $\alpha=0.3$, from Theorem \ref{cs_mc-cp-e}.}
    \label{fig:hist-e}
\end{figure}

\begin{figure}[h!]
    \centering
    \includegraphics[width=\textwidth]{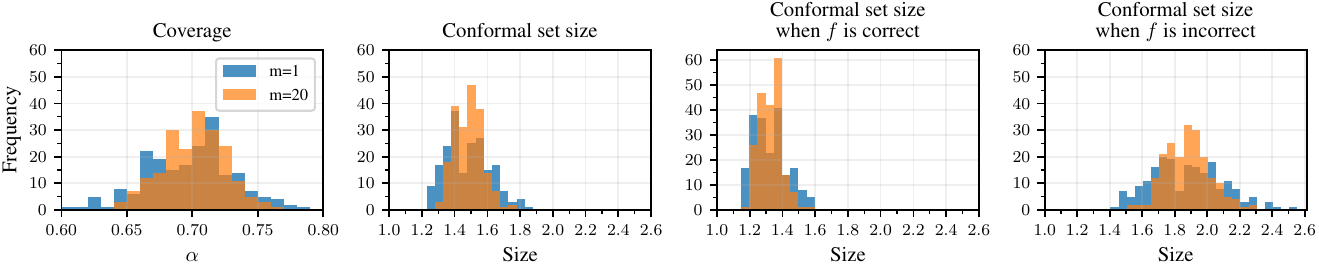}
    \caption{Comparison of coverage and conformal set sizes when using p-variables in Monte Carlo conformal prediction with $m=1$ or $m=20$ experts, with $\alpha=0.3$, from Theorem \ref{cs_mc-cp-p}.}
    \label{fig:hist-p}
\end{figure}

Note that to achieve a theoretical coverage guarantee of 0.70 in Monte Carlo conformal p-prediction, one can set $\alpha=0.15$, as this method theoretically ensures a coverage of $1-2\alpha$ rather than $1-\alpha$. Empirically, this would yield conformal sets with a coverage of 0.85, almost matching those obtained in Monte Carlo conformal e-prediction, which achieve an empirical coverage of approximately 0.90.

Monte Carlo conformal e-prediction is a theoretical alternative to Monte Carlo conformal p-prediction, providing a theoretical coverage guarantee of $1-\alpha$. However, in practice, Monte Carlo conformal p-prediction also achieves an empirical coverage of $1-\alpha$ rather than the theoretically guaranteed $1-2\alpha$, and seems to achieve higher-quality predictions. The reason behind this discrepancy remains unclear and is still an open problem.

\section{Conclusion}

In this work, we highlighted the flexibility of conformal e-prediction for uncertainty quantification. While traditional conformal p-prediction will continue to provide an important tool, in particular in standard statistical settings where data are exchangeable and a fixed coverage level $\alpha$ is employed, conformal e-prediction provides an appealing alternative for more complex scenarios, particularly when data are not fully exchangeable. We exhibited this flexibility in three settings where e-variables yield theoretically well-grounded conformal prediction guarantees: batch anytime-valid conformal prediction, fixed-size conformal sets with data-dependent coverage, and conformal prediction under ambiguous ground truth. These results highlight the potential of conformal e-prediction as a flexible tool for uncertainty quantification for complex problems in machine learning.

We believe there is much more to explore in conformal e-prediction. One ongoing challenge is the choice of the score function, which plays a crucial role in determining the size of conformal sets. Selecting an optimal score function is particularly important in settings like batch anytime-valid prediction, where it directly impacts the dynamics of the martingale.

Beyond this, further investigation into the applications introduced in this paper could yield valuable insights. In batch anytime-valid conformal prediction, an interesting question is whether we can use alternatives to Ville’s inequality that provide tighter bounds. While Ville’s inequality holds for supermartingales, we could explore inequalities that apply to martingales as well, given that $M_t$ introduced in Theorem \ref{thm:martingale} is not only a supermartingale but also a martingale. Another avenue is the choice of the martingale itself—could different constructions lead to stronger results? In particular, it would be interesting to investigate whether incorporating $\lambda_t$ terms that depend on the data up to time $t-1$, as explained in Remark \ref{rmk:universal-portfolio}, could lead to improved performance. 

\section*{Acknowledgements}
The authors would like to thank Eugène Berta, Nabil Boukir, Sacha Braun, Aymeric Capitaine, Mahmoud Hegazy, Nick Koning, Antoine Scheid, and Ian Waudby-Smith for their constructive reflections on conformal prediction and e-values, which have provided valuable insights.

Funded by the European Union (ERC-2022-SYG-OCEAN-101071601).
Views and opinions expressed are however those of the author(s) only and do not
necessarily reflect those of the European Union or the European Research Council
Executive Agency. Neither the European Union nor the granting authority can be
held responsible for them. 

This publication is part of the Chair ``Markets and Learning,'' supported by Air Liquide, BNP PARIBAS ASSET MANAGEMENT Europe, EDF, Orange and SNCF, sponsors of the Inria Foundation.

This work has also received support from the French government, managed by the National Research Agency, under the France 2030 program with the reference ``PR[AI]RIE-PSAI" (ANR-23-IACL-0008).

\nocite{vovk2021evalues}

\vskip 0.2in
\bibliographystyle{plainnat}
\bibliography{references}

\newpage

\nocite{jin2023selection}

\appendix

\section{Background on Conformal p-Prediction}
\label{app:intro-conformal-prediction}

In this appendix, we review key concepts in conformal (p-)prediction, introduced by \cite{vovk2005algorithmiclearning}, to provide a comprehensive foundation. In our work, we focus on split conformal prediction \citep{papadopoulos2002inductivecm}, and we explore the applications of conformal prediction methods in machine learning, a broad and active area of research (see, for example, \cite{balasubramanian2014conformal,romano2019conformalized, sadinle2019leastambiguous,romano2020classification, angelopoulos2021uncertainty, stutz2022learning,huang2023uncertaintyquantificationovergraph}). For a recent overview of conformal prediction, we refer the reader to \cite{angelopoulos2023gentle} and \cite{angelopoulos2024theoreticalfoundationsconformalprediction}.

\begin{definition}[Exchangeability]
Random variables $S_1,...,S_n$ are said to be \emph{exchangeable} if: for any permutation $\sigma \in \mathfrak{S}_n$, the distribution of the random vector $(S_1,...,S_n)$ is the same as that of $(S_{\sigma(1)},...,S_{\sigma(n)})$.
\end{definition}

Exchangeability is a weaker property than the common notion of independent and identically distributed (i.i.d.) random variables. The fundamental idea of conformal p-prediction is based on a simple observation: since the scores $S_i = S(X_i,Y_i)$ for $i=1,...,n,n+1$, are exchangeable, their order\footnote{The notion of order is well-defined if we assume that the $S_i$ are almost surely distinct. Otherwise, we can use an appropriate random tie-breaking rule and the results still hold.} follows a uniform distribution. In particular, this implies that:
\[
    \mathbb{P}(S_{n+1} \le \text{the } \lceil(1-\alpha)(n+1)\rceil \text{ smallest of } S_1,...,S_{n+1}) \ge 1-\alpha,
\]
and it turns out that we can rewrite the inequality above as
\[
    \mathbb{P}(S_{n+1} \le \text{the } \lceil(1-\alpha)(n+1)\rceil \text{ smallest of } S_1,...,S_{n}) \ge 1-\alpha
\]
by artificially defining the $n+1$ smallest value among $S_1,...,S_n$ to be $+ \infty$. Therefore, the following conformal set:
\begin{equation}
\label{cs-p1}
\hat{C}_n(x) := \{y : S(x,y) \le \text{the } \lceil(1-\alpha)(n+1)\rceil \text{ smallest of } S(X_1,Y_1),...,S(X_n,Y_n) \}
\end{equation}
effectively satisfies (\ref{cp_goal}). Note that we can rewrite $\hat{C}_n$ using the empirical cumulative distribution function of $S(X_1,Y_1),...,S(X_n,Y_n)$ as follows:
\begin{equation}
\label{cs-p2}
\hat{C}_n(x) = \left\{y : \frac{1}{n}\sum_{i=1}^n \mathbb{1}_{\{S(X_i,Y_i) \le S(x,y)\}} \le \frac{\lceil(1-\alpha)(n+1)\rceil}{n} \right\}. 
\end{equation}
$\hat{C}_n$ is also commonly rewritten as:
\begin{equation}
\label{cs-p3}
\hat{C}_n(x) = \left\{y : S(x,y) \le \text{quantile}\left(\frac{\lceil(1-\alpha)(n+1)\rceil}{n} ; \frac{1}{n}\sum_{i=1}^n \delta_{S(X_i,Y_i)} \right) \right\}, 
\end{equation}
where the quantile function returns $+\infty$ when the input is $\ge 1$.
The interpretation of (\ref{cs-p1}), (\ref{cs-p2}), and (\ref{cs-p3}) is that, with high probability, the test value $S(X_{n+1},Y_{n+1})$ cannot significantly exceed the values $S(X_i,Y_i)$ in the calibration set. 

The formulation (\ref{cs-p2}) can be reinterpreted through the lens of p-variables. We recall the definition of p-variables:

\begin{definition}[p-variable]
    A \emph{p-variable} $P$ is a nonnegative random variable that satisfies 
    \[
    \mathbb{P}(P \le \alpha) \le \alpha
    \]
for all $\alpha \in (0,1)$.
\end{definition}

\begin{remark}
\label{rmk:size_distortion}
Equivalently, a nonnegative random variable $P$ is a p-variable if, for all $\alpha \in (0,1)$, its \emph{size distortion} $\mathbb{P}(P \le \alpha)/\alpha$ at level $\alpha$ is at most 1. This equivalent definition is especially meaningful in the discussion of Section \ref{sec:post-hoc}, where we leverage the post-hoc statistical properties that e-variables provide, in contrast to p-variables.
\end{remark}

\begin{lemma}[Reformulation of (\ref{cs-p2}) with p-variables]
\label{lma:cp_p-variables}
    The following random variable:
    \[
    \frac{\sum_{i=1}^n\mathbb{1}_{\{S_i > S_{n+1}\}}+1}{n+1}
    \]
    is a p-variable.
\end{lemma}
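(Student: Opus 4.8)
The plan is to show that $P := \frac{\sum_{i=1}^n \mathbb{1}_{\{S_i > S_{n+1}\}} + 1}{n+1}$ dominates (stochastically, from above) a uniform-type quantity built from the rank of $S_{n+1}$ among the exchangeable scores $S_1,\dots,S_{n+1}$. First I would introduce the rank $R := \sum_{i=1}^{n+1} \mathbb{1}_{\{S_i \ge S_{n+1}\}}$ (or, to keep things clean, assume almost-sure distinctness and invoke the tie-breaking remark already made in the excerpt). The key structural fact is that $\sum_{i=1}^n \mathbb{1}_{\{S_i > S_{n+1}\}} = R - 1$ in the no-ties case, so $P = R/(n+1)$. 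Then by exchangeability of $S_1,\dots,S_{n+1}$, the rank $R$ is uniformly distributed on $\{1,2,\dots,n+1\}$, since each index is equally likely to occupy any given position in the sorted order.

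Given that, the computation is immediate: for any $\alpha \in (0,1)$,
\[
\mathbb{P}(P \le \alpha) = \mathbb{P}\!\left(R \le \alpha(n+1)\right) = \mathbb{P}\!\left(R \le \lfloor \alpha(n+1)\rfloor\right) = \frac{\lfloor \alpha(n+1)\rfloor}{n+1} \le \alpha,
\]
where the middle equality uses that $R$ is integer-valued and the last step uses $\lfloor x \rfloor \le x$. This establishes that $P$ is a p-variable per the definition given. For the case with ties, I would note that replacing strict inequalities by an exchangeable random tie-breaking rule (appending i.i.d.\ uniform auxiliary variables and breaking ties by them) preserves exchangeability of the augmented scores and can only make $P$ larger, so the bound still holds; this is exactly the caveat flagged in the footnote of the excerpt.

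The only mild obstacle is bookkeeping around ties and the precise definition of the rank (whether one counts $>$ or $\ge$, and whether the $+1$ corresponds to including the test point itself). Getting the indexing right so that $P$ really equals $R/(n+1)$ for the chosen convention — rather than $(R\pm 1)/(n+1)$ — is the one place to be careful; everything else is the standard "rank of an exchangeable variable is uniform" argument. I would therefore spend the bulk of the (short) proof on stating the exchangeability-implies-uniform-rank claim cleanly and then let the floor-function inequality finish it.
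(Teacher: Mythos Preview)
Your argument is correct in the no-ties case and is the standard direct proof: identify $P$ with $R/(n+1)$ where $R$ is the descending rank of $S_{n+1}$, use exchangeability to get $R$ uniform on $\{1,\dots,n+1\}$, and finish via the floor inequality. The paper takes a different route: it does not re-derive the uniform-rank fact but instead algebraically rewrites the event $\{P>\alpha\}$ until it coincides with the coverage event $\bigl\{\frac{1}{n}\sum_{i=1}^n\mathbb{1}_{\{S_i\le S_{n+1}\}}<\lceil(1-\alpha)(n+1)\rceil/n\bigr\}$ for the conformal set~(\ref{cs-p2}), and then invokes the already-established guarantee $\mathbb{P}(Y_{n+1}\in\hat C_n(X_{n+1}))\ge 1-\alpha$. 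Your route is more self-contained; the paper's route makes explicit why the lemma is titled a \emph{reformulation}---the p-variable is literally the split-conformal coverage statement rewritten.

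One caveat on your tie-handling sentence: the monotonicity goes the wrong way for the conclusion you draw. Tie-breaking does give $\tilde P\ge P$, but from $\tilde P$ being a p-variable and $\tilde P\ge P$ you cannot deduce that $P$ itself is a p-variable---and indeed it need not be: if all scores are almost surely equal then $P\equiv 1/(n+1)$, which violates $\mathbb{P}(P\le\alpha)\le\alpha$ for every $\alpha\in[1/(n+1),1)$. The footnote you cite is saying that one \emph{replaces} the scores by their tie-broken versions (so the relevant p-variable is $\tilde P$, not $P$); your rank argument already delivers exactly that for the augmented scores, so just state the conclusion for $\tilde P$ rather than trying to push it back to $P$.
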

\begin{proof}
    Let $\alpha \in (0,1)$. We want to show that $\mathbb{P}\left(\frac{\sum_{i=1}^n\mathbb{1}_{\{S_i > S_{n+1}\}}+1}{n+1} \le \alpha \right) \le \alpha$, which is equivalent to $\mathbb{P}\left(\frac{\sum_{i=1}^n\mathbb{1}_{\{S_i > S_{n+1}\}}+1}{n+1} > \alpha \right) \ge 1-\alpha$. Note that:
    \begin{align*}
        \frac{\sum_{i=1}^n\mathbb{1}_{\{S_i > S_{n+1}\}}+1}{n+1} > \alpha
        &\Leftrightarrow \sum_{i=1}^{n}\mathbb{1}_{\{S_i > S_{n+1}\}} > \alpha(n+1)-1\\
        &\Leftrightarrow n-\sum_{i=1}^{n}\mathbb{1}_{\{S_i \le S_{n+1}\}} > \alpha(n+1)-1\\
        &\Leftrightarrow \sum_{i=1}^{n}\mathbb{1}_{\{S_i \le S_{n+1}\}} < (1-\alpha)(n+1)\\
        &\Leftrightarrow \sum_{i=1}^{n}\mathbb{1}_{\{S_i \le S_{n+1}\}} < \lceil(1-\alpha)(n+1)\rceil \text{\quad since $\sum_{i=1}^{n}\mathbb{1}_{\{S_i \le S_{n+1}\}} \in \mathbb{N}$}\\
        &\Leftrightarrow \frac{1}{n}\sum_{i=1}^{n}\mathbb{1}_{\{S_i \le S_{n+1}\}} < \frac{\lceil(1-\alpha)(n+1)\rceil}{n},
    \end{align*}
    and this last event indeed occurs with probability $\ge 1-\alpha$ since the conformal set (\ref{cs-p2}) satisfies Property (\ref{cp_goal}), which concludes the proof.
\end{proof}
Conformal p-prediction not only ensures that $\mathbb{P}(Y_{n+1} \in \hat{C}_n(X_{n+1})) \ge 1-\alpha$, but also guarantees that $\mathbb{P}(Y_{n+1} \in \hat{C}_n(X_{n+1})) \le 1-\alpha+\frac{1}{n+1}$, provided there are almost surely no ties between the scores.

\section{Details on Section \ref{sec:ville}}
\label{app:ville}

\subsection{Network architecture and training details}

The model $f$ consists of two convolutional layers followed by three fully connected layers. The first convolutional layer applies 6 filters of size $5\times5$ to the input grayscale images, followed by a ReLU activation and $2\times2$ max pooling. The second convolutional layer applies 16 filters of size $5\times5$, followed by another ReLU activation and $2\times2$ max pooling. The output of the second convolutional layer is flattened and passed through three fully connected layers: 120 neurons with ReLU activation, 84 neurons with ReLU activation, and finally 62 output neurons corresponding to the number of classes. The model is trained using cross-entropy loss and SGD with a learning rate of 0.1 for 100 epochs, using cosine annealing and a batch size of 512. The model achieves an accuracy of 88.9\% on the training set and 87.6\% on the test set.

\subsection{Additional plots}

Figure \ref{fig:average_lengths} shows the sizes of the conformal sets obtained for each data batch. Interestingly, the size distribution varies across batches, reflecting not only the influence of the method and the batch position in the sequence but also an intrinsic dependence on the data within each batch. We also empirically observe that the conformal sets tend to grow larger as the position $t$ in the sequence increases. This highlights a limitation of the method: the more we seek simultaneous guarantees, the weaker these guarantees become.

\begin{figure}[h!]
    \centering
    \includegraphics[width=\textwidth]{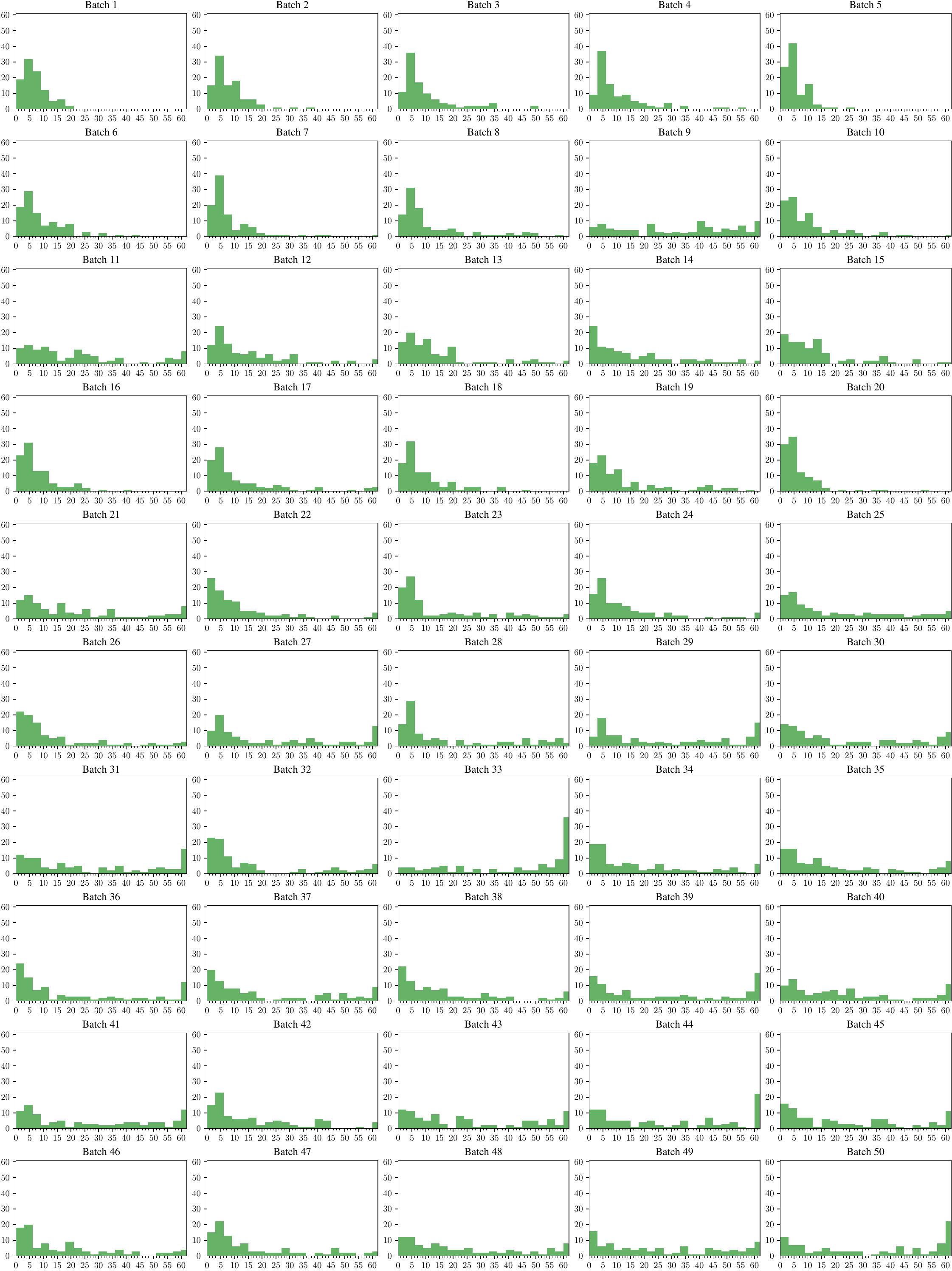}
    \caption{Histograms of conformal set sizes per batch, aggregated over 100 iterations.}
    \label{fig:average_lengths}
\end{figure}

We also visualize examples of batch anytime-valid conformal prediction sets in Figures \ref{fig:bav-conformal-sequence5}, \ref{fig:bav-conformal-sequence3}, \ref{fig:bav-conformal-sequence2}, \ref{fig:bav-conformal-sequence4}, and \ref{fig:bav-conformal-sequence}. The x-axis represents the time steps, while the y-axis corresponds to the different labels. Black cells indicate the conformal sets. For each time step, the true label is marked with a red star. For completeness, we also display the model $f$’s predictions using green triangles. Additionally, we plot the corresponding martingales $M_t$ in Figures \ref{fig:bav-martingale5}, \ref{fig:bav-martingale3}, \ref{fig:bav-martingale2}, \ref{fig:bav-martingale4}, and \ref{fig:bav-martingale}. As expected, the value of the martingale $M_t$ significantly influences the quality of the conformal set at time $t$. When the martingale is close to zero, the conformal sets become larger and less informative.

Interestingly, Figures \ref{fig:bav-conformal-sequence} and \ref{fig:bav-martingale} illustrate a case where the sequence of batch anytime-valid conformal sets does not fully cover the true labels. This occurs at batches $t=49$ and $t=50$.

\begin{figure}[h!]
    \centering
    \includegraphics[width=.9\textwidth]{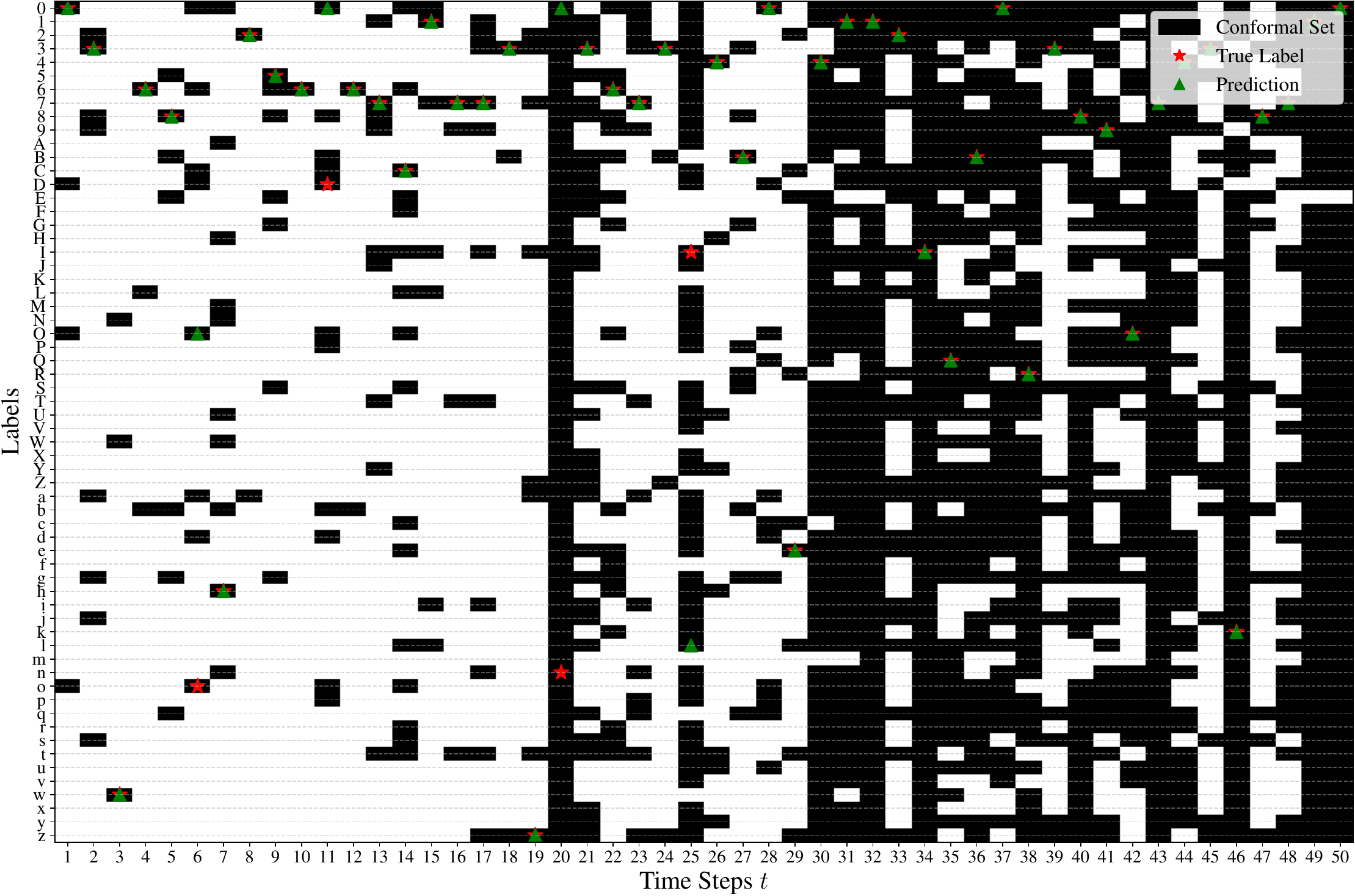}
    \caption{Example 1 of sequence of batch anytime-valid conformal sets.}
    \label{fig:bav-conformal-sequence5}
\end{figure}

\begin{figure}[h!]
    \centering
    \includegraphics[width=.9\textwidth]{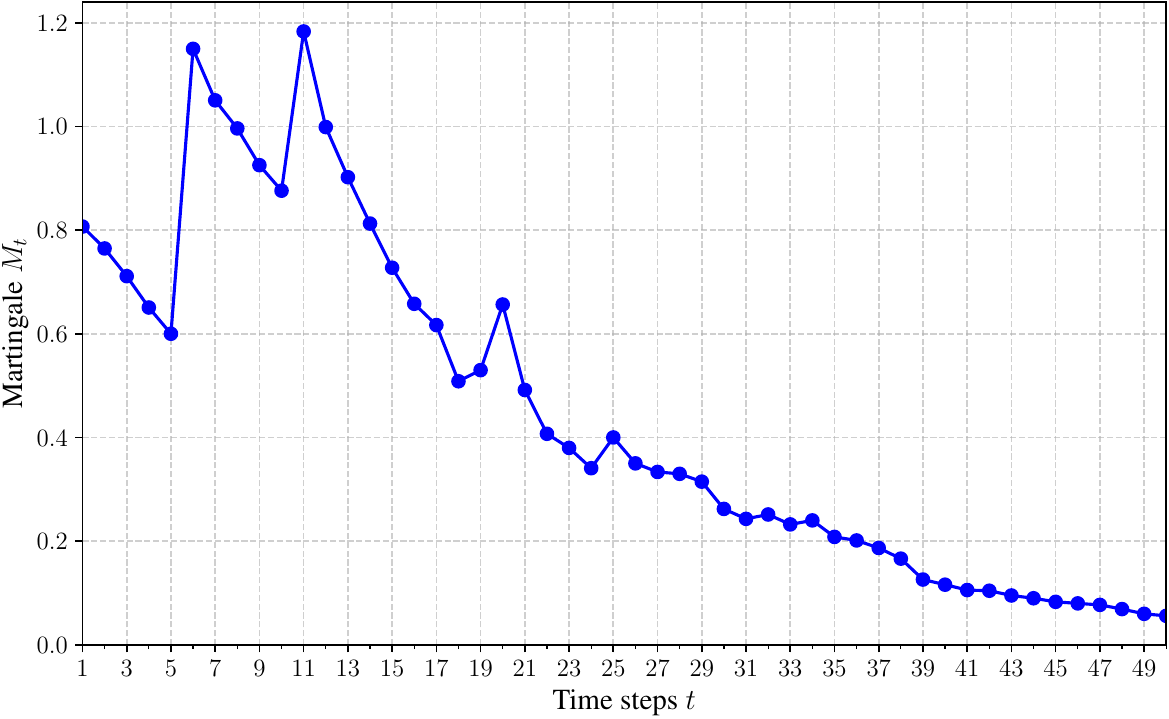}
    \caption{Sample path of the martingale $M_t$ associated with the sequence of batch anytime-valid conformal sets from Example 1.}
    \label{fig:bav-martingale5}
\end{figure}

\begin{figure}[h!]
    \centering
    \includegraphics[width=.9\textwidth]{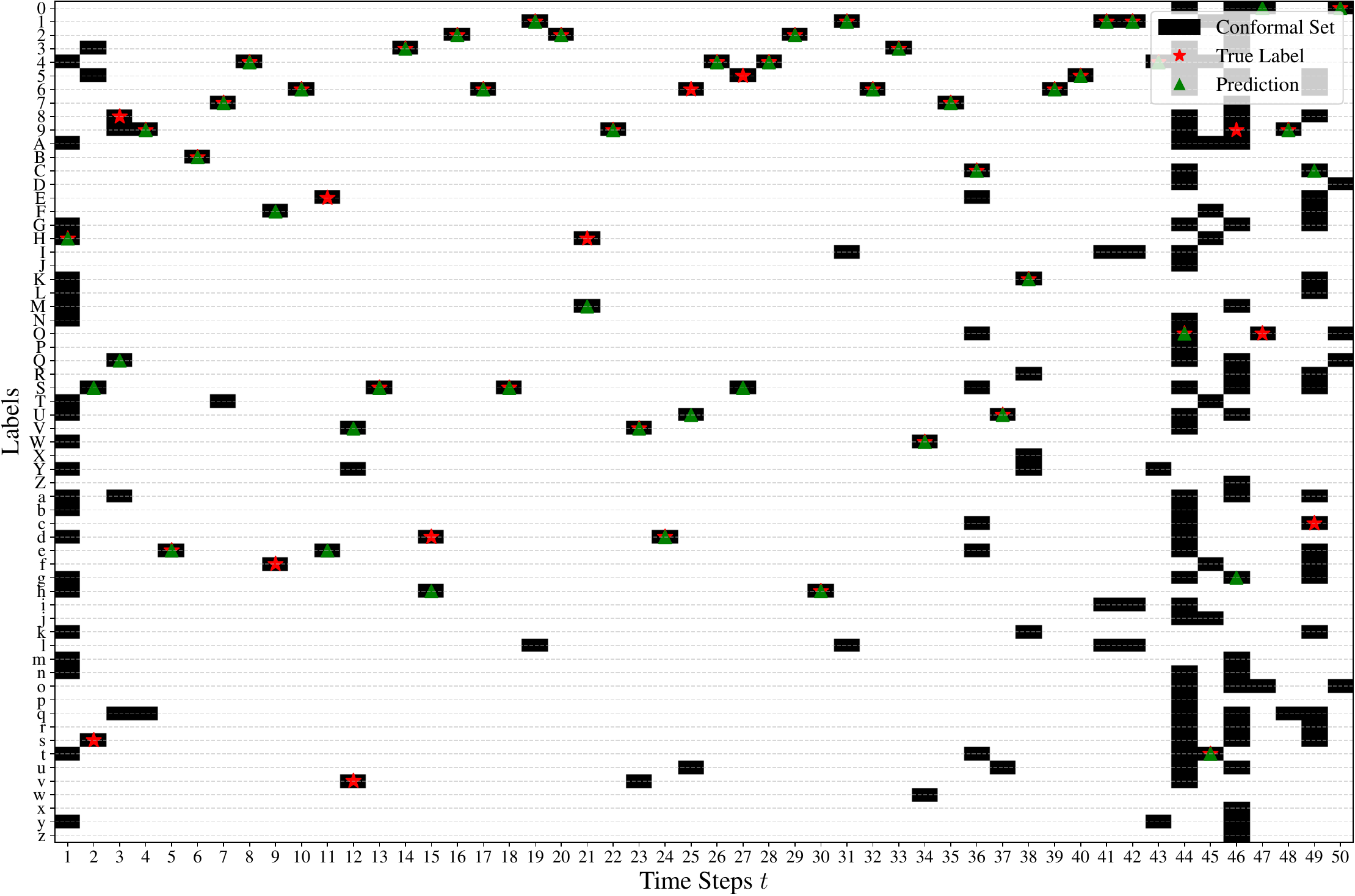}
    \caption{Example 2 of sequence of batch anytime-valid conformal sets.}
    \label{fig:bav-conformal-sequence3}
\end{figure}

\begin{figure}[h!]
    \centering
    \includegraphics[width=.9\textwidth]{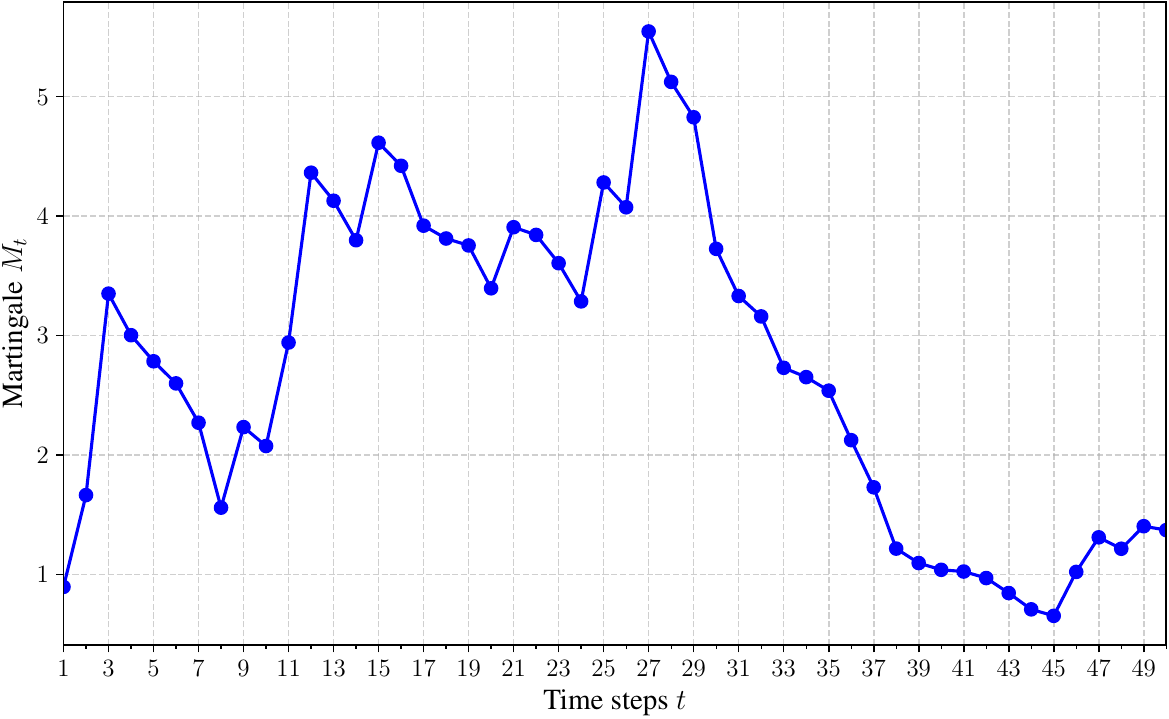}
    \caption{Sample path of the martingale $M_t$ associated with the sequence of batch anytime-valid conformal sets from Example 2.}
    \label{fig:bav-martingale3}
\end{figure}

\begin{figure}[h!]
    \centering
    \includegraphics[width=.9\textwidth]{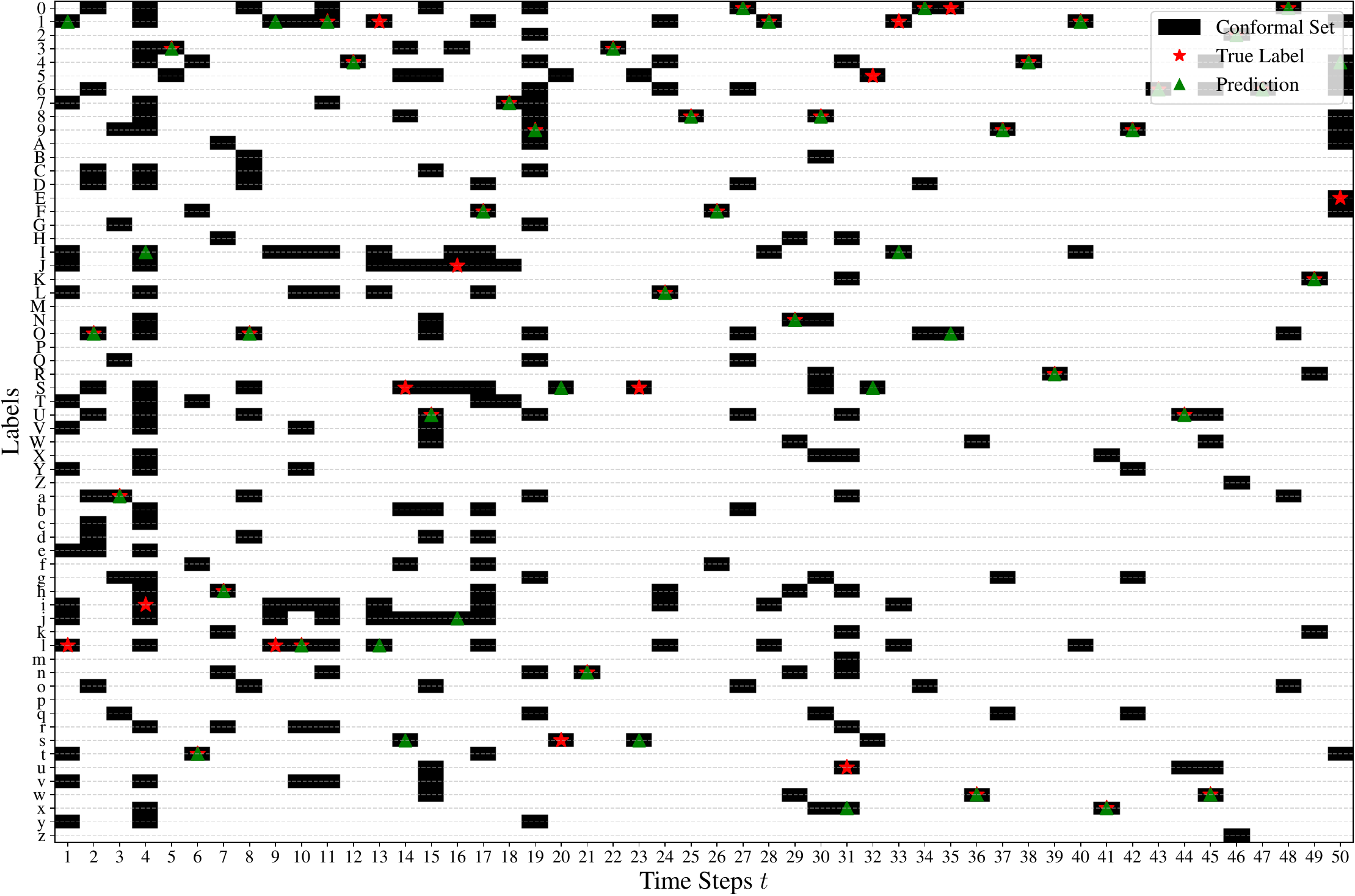}
    \caption{Example 3 of sequence of batch anytime-valid conformal sets.}
    \label{fig:bav-conformal-sequence2}
\end{figure}

\begin{figure}[h!]
    \centering
    \includegraphics[width=.9\textwidth]{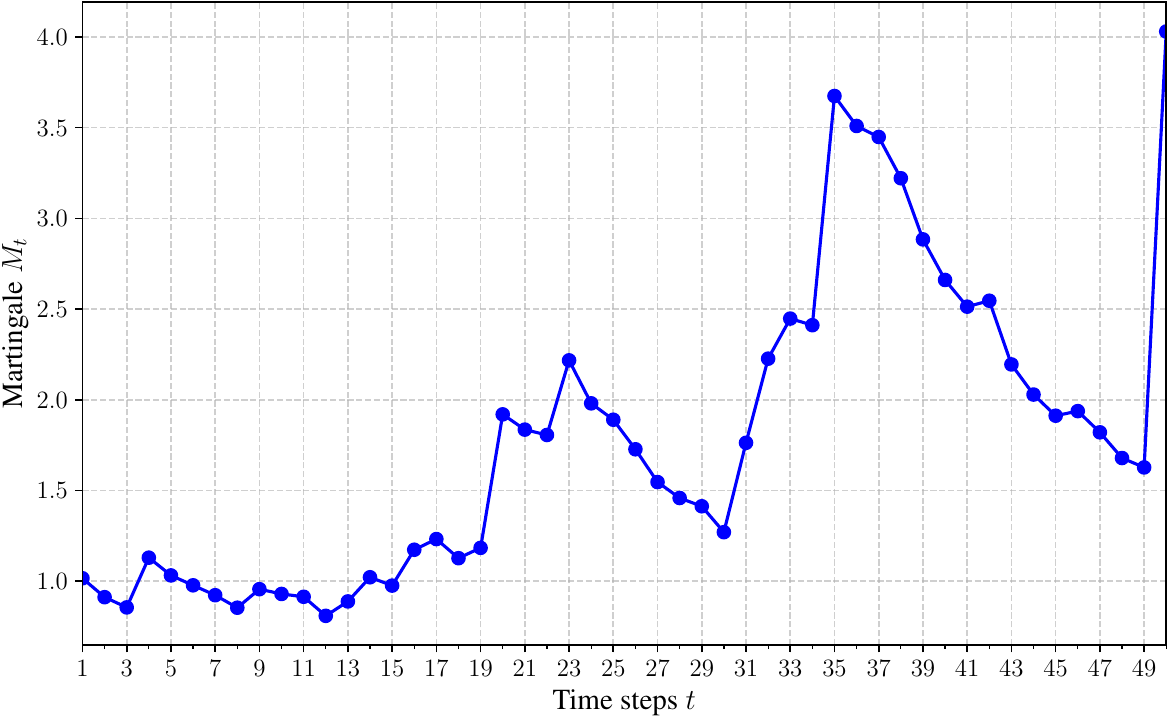}
    \caption{Sample path of the martingale $M_t$ associated with the sequence of batch anytime-valid conformal sets from Example 3.}
    \label{fig:bav-martingale2}
\end{figure}

\begin{figure}[h!]
    \centering
    \includegraphics[width=.9\textwidth]{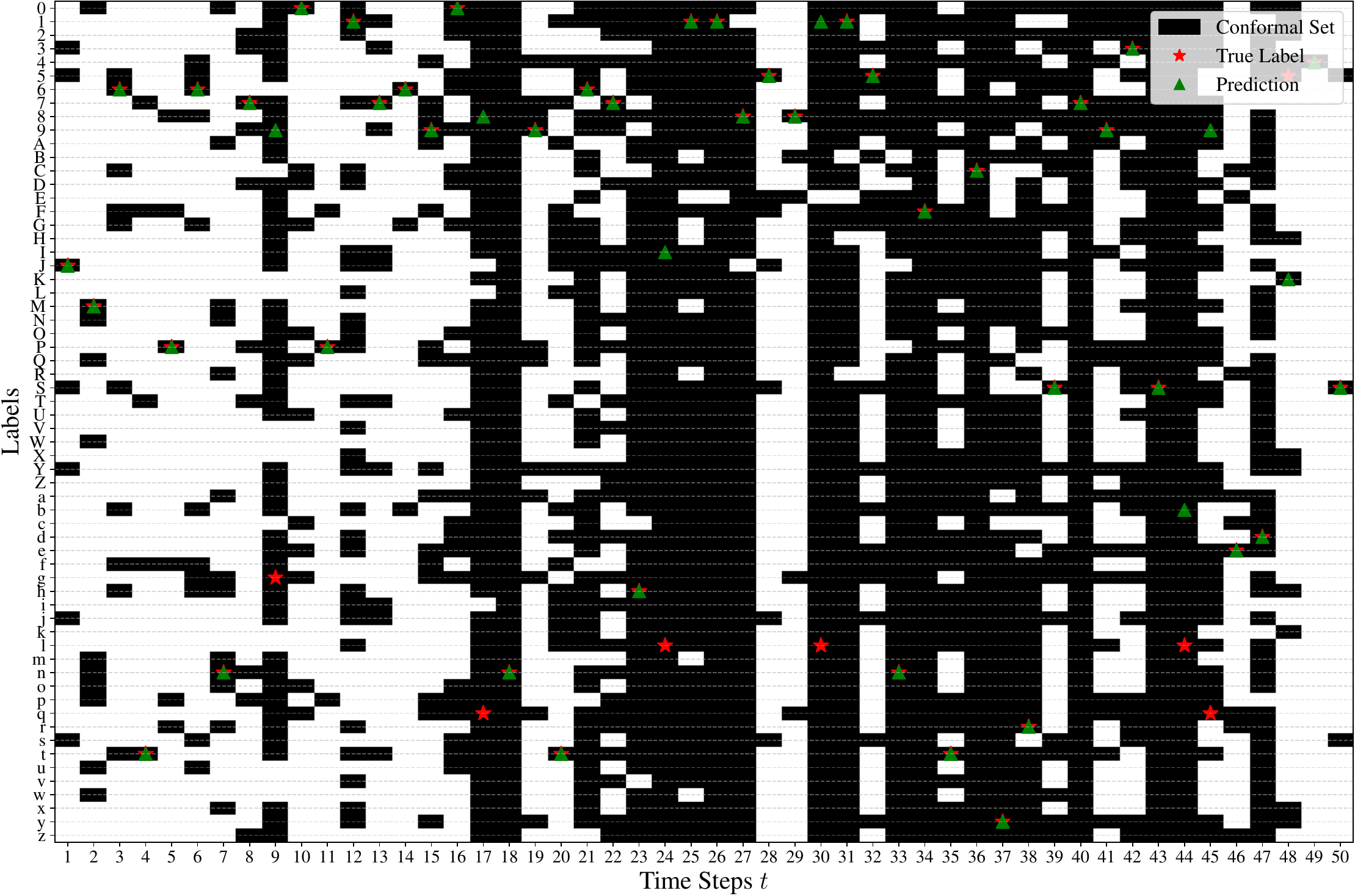}
    \caption{Example 4 of sequence of batch anytime-valid conformal sets.}
    \label{fig:bav-conformal-sequence4}
\end{figure}

\begin{figure}[h!]
    \centering
    \includegraphics[width=.9\textwidth]{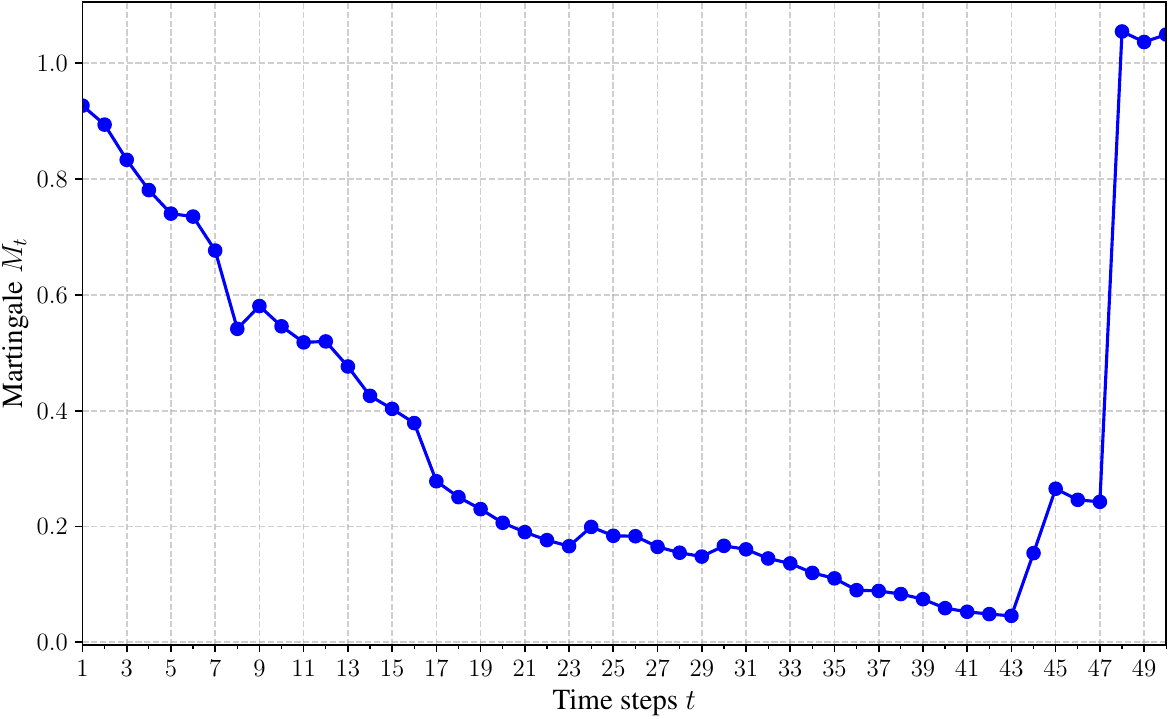}
    \caption{Sample path of the martingale $M_t$ associated with the sequence of batch anytime-valid conformal sets from Example 4.}
    \label{fig:bav-martingale4}
\end{figure}

\begin{figure}[h!]
    \centering
    \includegraphics[width=.9\textwidth]{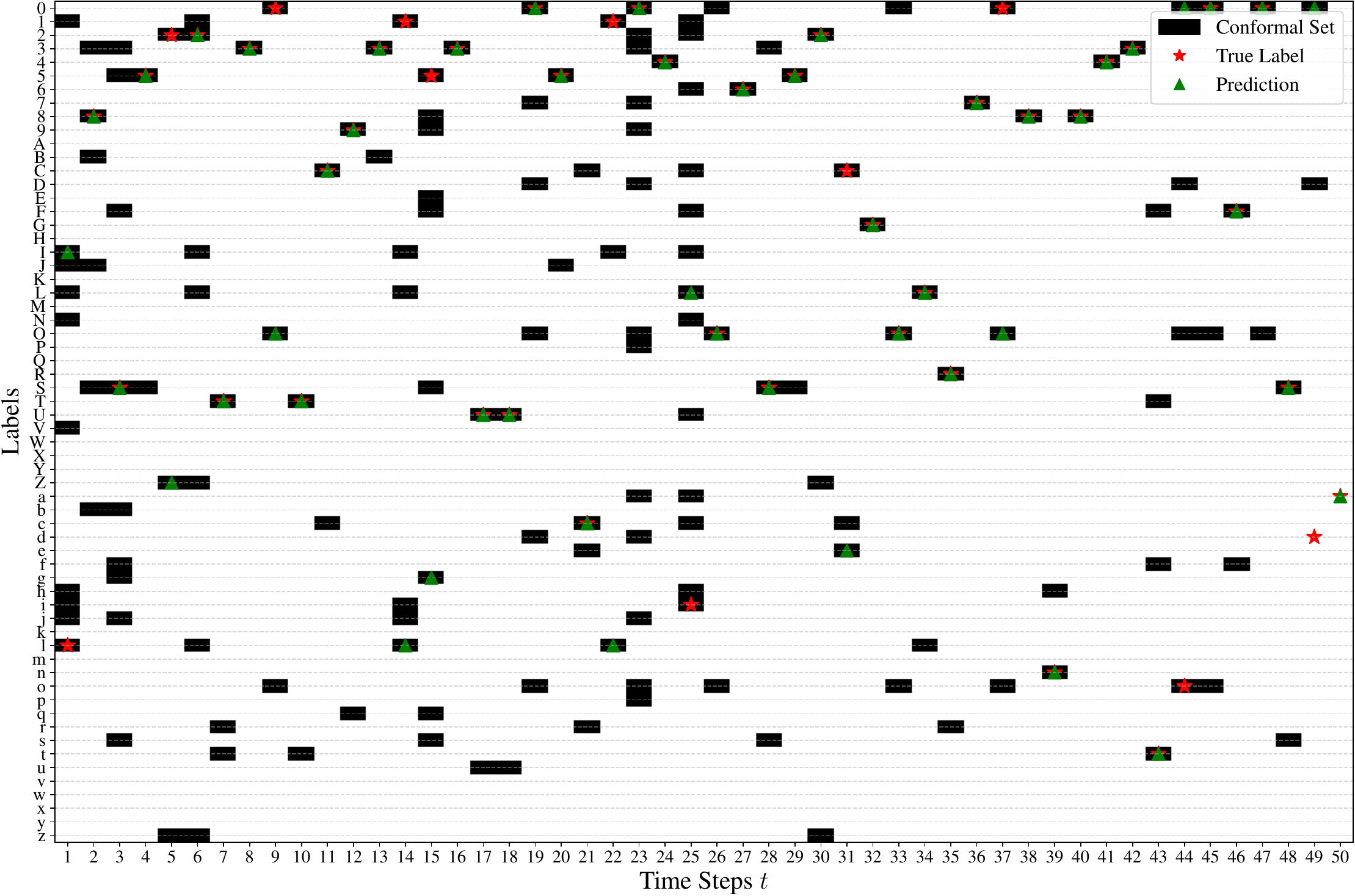}
    \caption{Example 5 of sequence of batch anytime-valid conformal sets.}
    \label{fig:bav-conformal-sequence}
\end{figure}

\begin{figure}[h!]
    \centering
    \includegraphics[width=.9\textwidth]{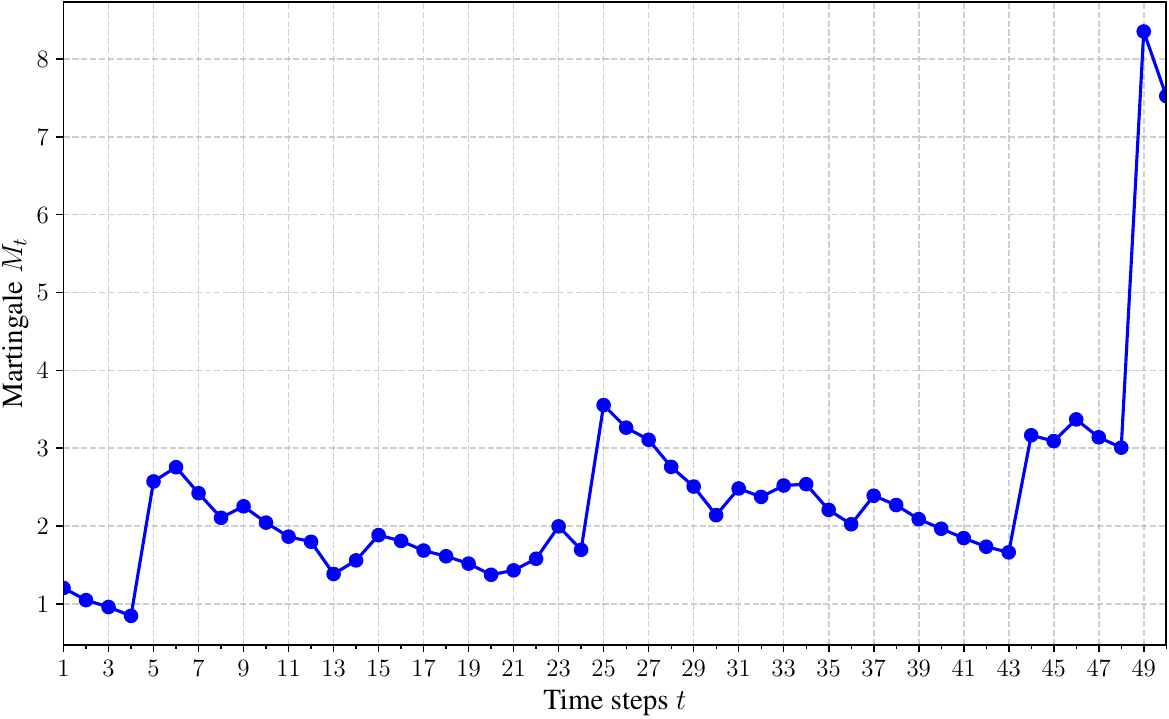}
    \caption{Sample path of the martingale $M_t$ associated with the sequence of batch anytime-valid conformal sets from Example 5.}
    \label{fig:bav-martingale}
\end{figure}

\clearpage
\section{Additional Plots for Section \ref{sec:post-hoc}}
\label{app:post-hoc}

We present additional sample results from the experiments in Section \ref{sec:post-hoc}, in Figure \ref{fig:example5} and Figure \ref{fig:example6}. For example, if we aim to obtain conformal sets with a size (at most) $C=5$, applying Equation (\ref{eq:def-alpha-tilde}) yields $\tilde{\alpha}=0.11$ in Example 2 of Figure \ref{fig:example5}, and $\tilde{\alpha}=0.04$ in Example 3 of Figure \ref{fig:example6}.

\begin{figure}[h!]
    \centering
    \includegraphics[width=\textwidth]{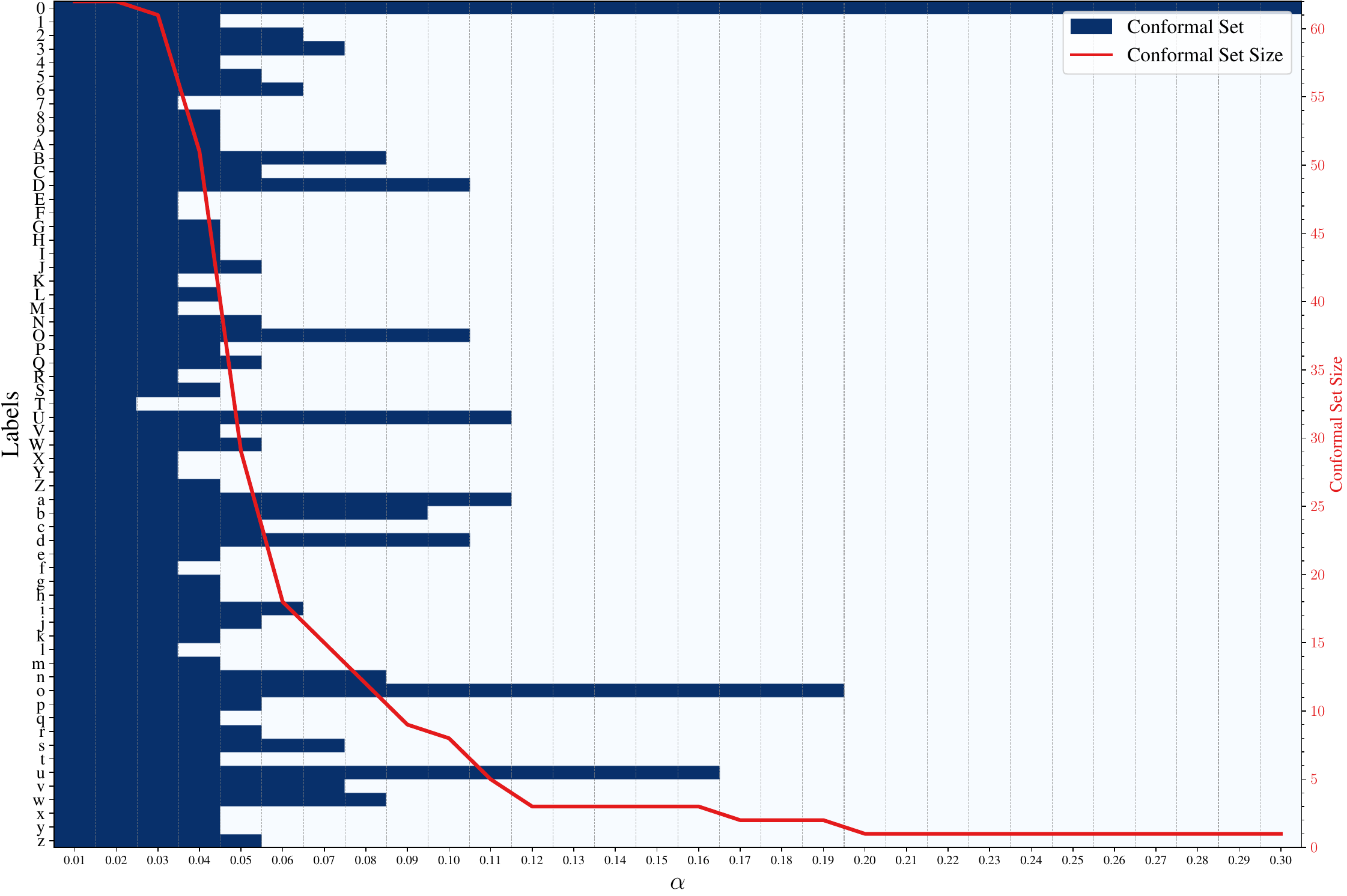}
    \caption{Example 2 of conformal sets obtained with varying $\tilde{\alpha}$.}
    \label{fig:example5}
\end{figure}

\begin{figure}[h!]
    \centering
    \includegraphics[width=\textwidth]{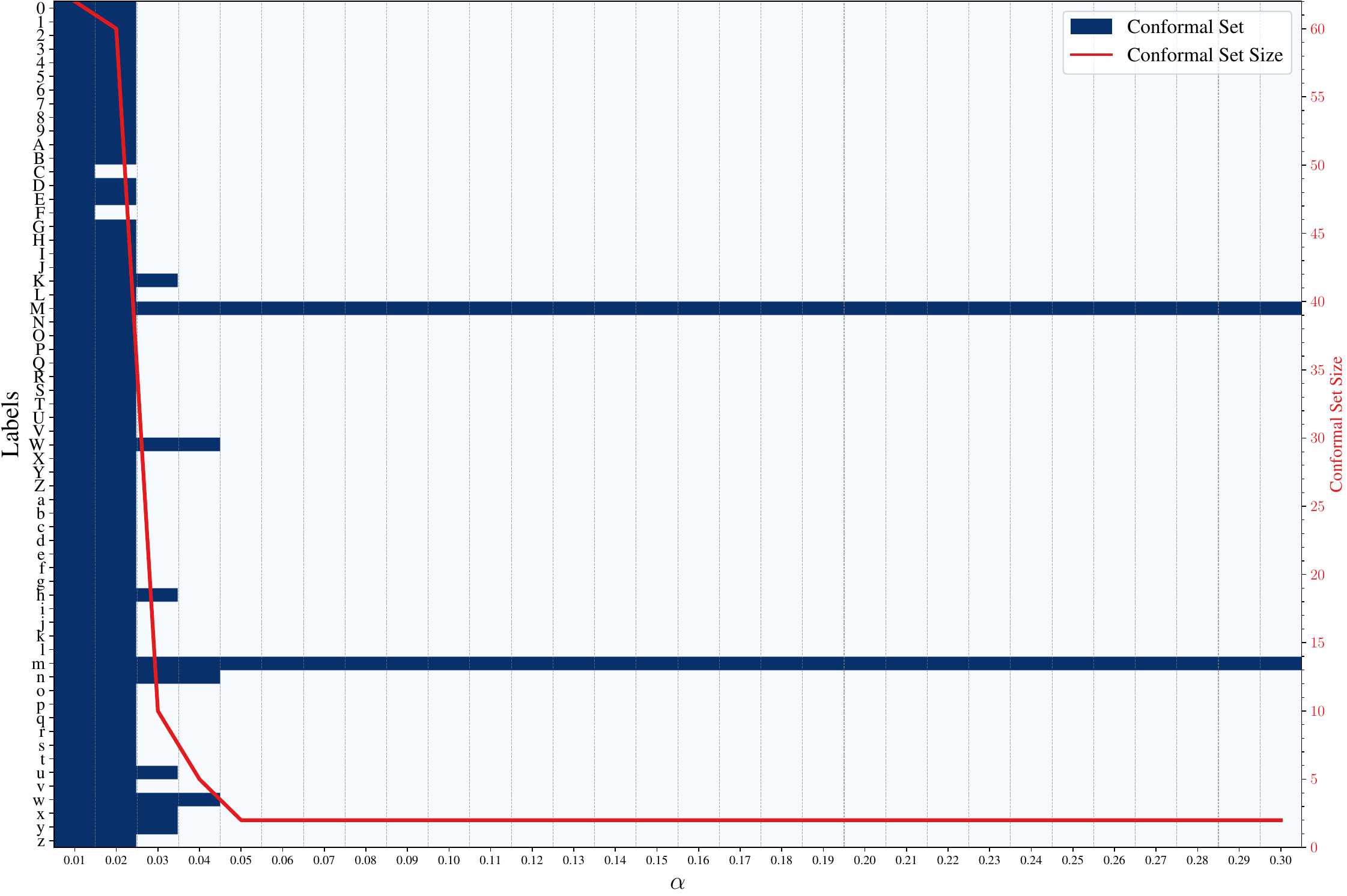}
    \caption{Example 3 of conformal sets obtained with varying $\tilde{\alpha}$.}
    \label{fig:example6}
\end{figure}

\end{document}